\documentclass[letterpaper]{article}
\usepackage[latin9]{inputenc}
\usepackage{float}
\usepackage{url}
\usepackage{amsthm}
\usepackage{amsmath}
\usepackage{amssymb}
\usepackage{graphicx}
\usepackage{esint}
\usepackage[authoryear]{natbib}

\makeatletter


\floatstyle{ruled}
\newfloat{algorithm}{tbp}{loa}
\providecommand{\algorithmname}{Algorithm}
\floatname{algorithm}{\protect\algorithmname}

\theoremstyle{plain}
\newtheorem{thm}{\protect\theoremname}
\theoremstyle{plain}
\newtheorem{prop}[thm]{\protect\propositionname}
\ifx\proof\undefined
\newenvironment{proof}[1][\protect\proofname]{\par
\normalfont\topsep6\p@\@plus6\p@\relax
\trivlist
\itemindent\parindent
\item[\hskip\labelsep
\scshape
#1]\ignorespaces
}{%
\endtrivlist\@endpefalse
}
\providecommand{\proofname}{Proof}
\fi




\usepackage{times}
\usepackage{graphicx} 
\usepackage{subfigure} 

\usepackage{natbib}

\usepackage{algorithm}
\usepackage{algorithmic}

\usepackage{hyperref}


\usepackage[accepted]{icml2014}

\icmltitlerunning{Kernel Adaptive Metropolis Hastings}

%

\makeatother

\providecommand{\propositionname}{Proposition}
\providecommand{\theoremname}{Theorem}

\begin{document}
\twocolumn[
\icmltitle{Kernel Adaptive Metropolis-Hastings}
\icmlauthor{Dino Sejdinovic${}^\star$}{dino@gatsby.ucl.ac.uk}
\icmlauthor{Heiko Strathmann${}^\star$}{ucabhst@gatsby.ucl.ac.uk}
\icmlauthor{Maria Lomeli Garcia${}^\star$}{mlomeli@gatsby.ucl.ac.uk}
\icmlauthor{Christophe Andrieu${}^\ddagger$}{c.andrieu@bristol.ac.uk}
\icmlauthor{Arthur Gretton${}^\star$}{arthur.gretton@gmail.com}
\icmladdress{${}^\star$Gatsby Unit, CSML, University College London, UK and ${}^\ddagger$School of Mathematics, University of Bristol, UK}
\icmlkeywords{kernel methods, adaptive MCMC, pseudo-marginal MCMC}
\vskip 0.2in
]

\global\long\def\Mz{M_{\mathbf{z},y}}
\global\long\def\muz{\mu_{\mathbf{z}}}
\global\long\def\aj{\alpha^{(j)}}
\global\long\def\ajt{\left(\alpha^{(j)}\right)^{\top}}
\vspace{-0.2cm} 
\begin{abstract}
A Kernel Adaptive Metropolis-Hastings algorithm is introduced, for
the purpose of sampling from a target distribution with strongly nonlinear
support. The algorithm embeds the trajectory of the Markov chain into
a reproducing kernel Hilbert space (RKHS), such that the feature space
covariance of the samples informs the choice of proposal. The procedure
is computationally efficient and straightforward to implement, since
the RKHS moves can be integrated out analytically: our proposal distribution
in the original space is a normal distribution whose mean and covariance
depend on where the current sample lies in the support of the target
distribution, and adapts to its local covariance structure. Furthermore,
the procedure requires neither gradients nor any other higher order
information about the target, making it particularly attractive for
contexts such as Pseudo-Marginal MCMC. Kernel Adaptive Metropolis-Hastings
outperforms competing fixed and adaptive samplers on multivariate,
highly nonlinear target distributions, arising in both real-world
and synthetic examples.\vspace{-0.2cm} 
\end{abstract}

\section{Introduction}

The choice of the proposal distribution is known to be crucial for
the design of Metropolis-Hastings algorithms, and methods for adapting
the proposal distribution to increase the sampler's efficiency based
on the history of the Markov chain have been widely studied. These
methods often aim to learn the covariance structure of the target
distribution, and adapt the proposal accordingly. Adaptive MCMC samplers
were first studied by \citet{Haario1999,Haario2001}, where the authors
propose to update the proposal distribution along the sampling process.
Based on the chain history, they estimate the covariance of the target
distribution and construct a Gaussian proposal centered at the current
chain state, with a particular choice of the scaling factor from \citet{Gelman96}.
More sophisticated schemes are presented by \citet{Andrieu08}, e.g.,
adaptive scaling, component-wise scaling, and principal component
updates.

While these strategies are beneficial for distributions that show
high anisotropy (e.g., by ensuring the proposal uses the right scaling
in all principal directions), they may still suffer from low acceptance
probability and slow mixing when the target distributions are strongly
nonlinear, and the directions of large variance depend on the current
location of the sampler in the support. In the present work, we develop
an adaptive Metropolis-Hastings algorithm in which samples are mapped
to a reproducing kernel Hilbert space, and the proposal distribution
is chosen according to the covariance in this feature space \citep{SchSmoMul98,SmoMikSchWil01}.
Unlike earlier adaptive approaches, the resulting proposal distributions
are locally adaptive in input space, and oriented towards nearby regions
of high density, rather than simply matching the global covariance
structure of the distribution. Our approach combines a move in the
feature space with a stochastic step towards the nearest input space
point, where the feature space move can be analytically integrated
out. Thus, the implementation of the procedure is straightforward:
the proposal is simply a multivariate Gaussian in the input space,
with location-dependent covariance which is informed by the feature
space representation of the target. Furthermore, the resulting Metropolis-Hastings
sampler only requires the ability to evaluate the unnormalized density
of the target \citep[or its unbiased estimate, as in Pseudo-Marginal MCMC of][]{andrieu2009pseudo},
and no gradient evaluation is needed, making it applicable to situations
where more sophisticated schemes based on Hamiltonian Monte Carlo
(HMC) or Metropolis Adjusted Langevin Algorithms (MALA) \citep{Roberts2003,RSSB:RSSB765}
cannot be applied.

We begin our presentation in Section \ref{sec:Background}, with a
brief overview of existing adaptive Metropolis approaches; we also
review covariance operators in the RKHS. Based on these operators,
we describe a sampling strategy for Gaussian measures in the RKHS
in Section \ref{sec:Sampling-in-RKHS}, and introduce a cost function
for constructing proposal distributions. In Section \ref{sec:Kernel-adaptive-proposal},
we outline our main algorithm, termed Kernel Adaptive Metropolis-Hastings
(MCMC Kameleon). We provide experimental comparisons with other fixed
and adaptive samplers in Section \ref{sec:Experiments}, where we
show superior performance in the context of Pseudo-Marginal MCMC for
Bayesian classification, and on synthetic target distributions with
highly nonlinear shape.

\section{\label{sec:Background}Background}

\paragraph{Adaptive Metropolis Algorithms.}

Let $\mathcal{X}=\mathbb{R}^{d}$ be the domain of interest, and denote
the unnormalized target density on $\mathcal{X}$ by $\pi$. Additionally,
let $\Sigma_{t}=\Sigma_{t}(x_{0},x_{1},\ldots,x_{t-1})$ denote an
estimate of the covariance matrix of the target density based on the
chain history $\left\{ x_{i}\right\} _{i=0}^{t-1}$. The original
adaptive Metropolis at the current state of the chain state $x_{t}=y$
uses the proposal
\begin{equation}
q_{t}(\cdot|y)=\mathcal{N}(y,\nu^{2}\Sigma_{t}),\label{eq: am_proposal}
\end{equation}
where $\nu=2.38/\sqrt{d}$ is a fixed scaling factor from \citet{Gelman96}.
This choice of scaling factor was shown to be optimal (in terms of
efficiency measures) for the usual Metropolis algorithm. While this
optimality result does not hold for Adaptive Metropolis, it can nevertheless
be used as a heuristic. Alternatively, the scale $\nu$ can also be
adapted at each step as in \citet[Algorithm 4]{Andrieu08} to obtain
the acceptance rate from \citet{Gelman96}, $a^{*}=0.234$.

\paragraph{RKHS Embeddings and Covariance Operators.}

According to the Moore-Aronszajn theorem \citep[p. 19]{BerTho04},
for every symmetric, positive definite function (\emph{kernel}) $k:\mathcal{X}\times\mathcal{X}\to\mathbb{R}$,
there is an associated reproducing kernel Hilbert space $\mathcal{H}_{k}$
of real-valued functions on $\mathcal{X}$ with reproducing kernel
$k$. The map $\varphi:\mathcal{X}\to\mathcal{H}_{k}$, $\varphi:x\mapsto k(\cdot,x)$
is called the canonical feature map of $k$. This feature map or embedding
of a single point can be extended to that of a probability measure
$P$ on $\mathcal{X}$: its kernel embedding is an element $\mu_{P}\in\mathcal{H}_{k}$,
given by $\mu_{P}=\int k(\cdot,x)\, dP(x)$ \citep{BerTho04,FukBacJor04,SmoGreSonSch07}.
If a measurable kernel $k$ is bounded, it is straightforward to show
using the Riesz representation theorem that the mean embedding $\mu_{k}(P)$
exists for all probability measures on $\mathcal{X}$. For many interesting
bounded kernels $k$, including the Gaussian, Laplacian and inverse
multi-quadratics, the kernel embedding $P\mapsto\mu_{P}$ is injective.
Such kernels are said to be \emph{characteristic} \citep{SriGreFukLanetal10,Sriperumbudur2011},
since each distribution is uniquely characterized by its embedding
(in the same way that every probability distribution has a unique
characteristic function).  The kernel embedding $\mu_{P}$ is the
representer of expectations of smooth functions w.r.t. $P$, i.e.,
$\forall f\in\mathcal{H}_{k}$, $\left\langle f,\mu_{P}\right\rangle _{\mathcal{H}_{k}}=\int f(x)dP(x)$.
Given samples $\mathbf{z}=\left\{ z_{i}\right\} _{i=1}^{n}\sim P$,
the embedding of the empirical measure is $\muz=\frac{1}{n}\sum_{i=1}^{n}k(\cdot,z_{i})$.

Next, the covariance operator $C_{P}:\mathcal{H}_{k}\to\mathcal{H}_{k}$
for a probability measure $P$ is given by $C_{P}=\int k(\cdot,x)\otimes k(\cdot,x)\, dP(x)-\mu_{P}\otimes\mu_{P}$
\citep{Baker73,FukBacJor04}, where for $a,b,c\in\mathcal{H}_{k}$
the tensor product is defined as $(a\otimes b)c=\left\langle b,c\right\rangle _{\mathcal{H}_{k}}a$.
The covariance operator has the property that $\forall f,g\in\mathcal{H}_{k}$,
$\left\langle f,C_{P}g\right\rangle _{\mathcal{H}_{k}}=\mathbb{E}_{P}(fg)-\mathbb{E}_{P}f\mathbb{E}_{P}g$.

Our approach is based on the idea that the nonlinear support of a
target density may be learned using Kernel Principal Component Analysis
(Kernel PCA) \citep{SchSmoMul98,SmoMikSchWil01}, this being linear
PCA on the empirical covariance operator in the RKHS, $C_{\mathbf{z}}=\frac{1}{n}\sum_{i=1}^{n}k(\cdot,z_{i})\otimes k(\cdot,z_{i})-\muz\otimes\muz$,
computed on the sample $\mathbf{z}$ defined above. The empirical
covariance operator behaves as expected: applying the tensor product
definition gives $\left\langle f,C_{\mathbf{z}}g\right\rangle _{\mathcal{H}_{k}}=\frac{1}{n}\sum_{i=1}^{n}f(z_{i})g(z_{i})-\left(\frac{1}{n}\sum_{i=1}^{n}f(z_{i})\right)\left(\frac{1}{n}\sum_{i=1}^{n}g(z_{i})\right)$.
By analogy with algorithms which use linear PCA directions to inform
M-H proposals \citep[Algorithm 8]{Andrieu08}, nonlinear PCA directions
can be encoded in the proposal construction, as described in Appendix
\ref{sec:Principal-components-proposals}. Alternatively, one can
focus on a Gaussian measure on the RKHS determined by the empirical
covariance operator $C_{\mathbf{z}}$ rather than extracting its eigendirections,
which is the approach we pursue in this contribution. This generalizes
the proposal \eqref{eq: am_proposal}, which considers the Gaussian
measure induced by the empirical covariance matrix on the original
space.

\section{\label{sec:Sampling-in-RKHS}Sampling in RKHS}

We next describe the proposal distribution at iteration $t$ of the
MCMC chain. We will assume that a subset of the chain history, denoted
$\mathbf{z}=\left\{ z_{i}\right\} _{i=1}^{n}$, $n\leq t-1$, is
available. Our proposal is constructed by first considering the samples
in the RKHS associated to the empirical covariance operator, and then
performing a gradient descent step on a cost function associated with
those samples.

\paragraph{Gaussian Measure of the Covariance Operator.}

We will work with the Gaussian measure on the RKHS $\mathcal{H}_{k}$
with mean $k(\cdot,y)$ and covariance $\nu^{2}C_{\mathbf{z}}$, where
$\mathbf{z}=\left\{ z_{i}\right\} _{i=1}^{n}$ is the subset of the
chain history. While there is no analogue of a Lebesgue measure in
an infinite dimensional RKHS, it is instructive (albeit with some
abuse of notation) to denote this measure in the ``density form''
$\mathcal{N}(f\,;\, k(\cdot,y),\nu^{2}C_{\mathbf{z}})\propto\exp\left(-\frac{1}{2\nu^{2}}\left\langle f-k(\cdot,y),C_{\mathbf{z}}^{-1}(f-k(\cdot,y))\right\rangle _{\mathcal{H}_{k}}\right)$.
As $C{}_{\mathbf{z}}$ is a finite-rank operator, this measure is
supported only on a finite-dimensional affine space $k(\cdot,y)+\mathcal{H}_{\mathbf{z}}$,
where $\mathcal{H}_{\mathbf{z}}=\textrm{span}\left\{ k(\cdot,z_{i})\right\} _{i=1}^{n}$
is the subspace spanned by the canonical features of $\mathbf{z}$.
It can be shown that a sample from this measure has the form $f=k(\cdot,y)+\sum_{i=1}^{n}\beta_{i}\left[k(\cdot,z_{i})-\muz\right],$
where $\beta\sim\mathcal{N}(0,\frac{\nu^{2}}{n}I)$ is isotropic.
Indeed, to see that $f$ has the correct covariance structure, note
that:\vspace{-0.2cm}

\begin{align*}
 & \mathbb{E}\left[\left(f-k(\cdot,y)\right)\otimes\left(f-k(\cdot,y)\right)\right]\\
 & =\mathbb{E}\left[\sum_{i=1}^{n}\sum_{j=1}^{n}\beta_{i}\beta_{j}\left(k(\cdot,z_{i})-\mu_{\mathbf{z}}\right)\otimes\left(k(\cdot,z_{j})-\mu_{\mathbf{z}}\right)\right]\\
 & =\frac{\nu^{2}}{n}\sum_{i=1}^{n}\left(k(\cdot,z_{i})-\mu_{\mathbf{z}}\right)\otimes\left(k(\cdot,z_{i})-\mu_{\mathbf{z}}\right)=\nu^{2}C_{\mathbf{z}}.
\end{align*}

{\small }Due to the equivalence in the RKHS between a Gaussian
measure and a Gaussian Process (GP) \citep[Ch. 4]{BerTho04}, we can
think of the RKHS samples $f$ as trajectories of the GP with mean
$m(x)=k(x,y)$ and covariance function\vspace{-0.2cm}
\begin{align*}
\kappa(x,x') & =\textrm{cov}\left[f(x),f(x')\right]\\
 & =\frac{\nu^{2}}{n}\sum_{i=1}^{n}\left(k(x,z_{i})-\mu_{\mathbf{z}}(x)\right)\left(k(x',z_{i})-\mu_{\mathbf{z}}(x')\right).
\end{align*}
The covariance function $\kappa$ of this GP is therefore the kernel
$k$ convolved with itself with respect to the empirical measure associated
to the samples $\mathbf{z}$, and draws from this GP therefore lie
in a smaller RKHS; see \citet[p. 21]{Saitoh1997} for details.

\paragraph{Obtaining Target Samples through Gradient Descent.}

We have seen how to obtain the RKHS sample $f=k(\cdot,y)+\sum_{i=1}^{n}\beta_{i}\left[k(\cdot,z_{i})-\muz\right]$
from the Gaussian measure in the RKHS. This sample does not in general
have a corresponding pre-image in the original domain $\mathcal{X}=\mathbb{R}^{d}$;
i.e., there is no point $x_{*}\in\mathcal{X}$ such that $f=k(\cdot,x_{*})$.
If there were such a point, then we could use it as a proposal in
the original domain. Therefore, we are ideally looking for a point
$x^{*}\in\mathcal{X}$ whose canonical feature map $k(\cdot,x^{*})$
is close to $f$ in the RKHS norm. We consider the optimization problem\vspace{-0.5cm}

{\footnotesize 
\begin{multline*}
\arg\min_{x\in\mathcal{X}}\left\Vert k\left(\cdot,x\right)-f\right\Vert _{\mathcal{H}_{k}}^{2}=\\
\quad\arg\min_{x\in\mathcal{X}}\left\{ k(x,x)-2k(x,y)-2\sum_{i=1}^{n}\beta_{i}\left[k(x,z_{i})-\muz(x)\right]\right\} .
\end{multline*}
}{\small }In general, this is a non-convex minimization problem,
and may be difficult to solve \citep{BakWesSch03}. Rather than solving
it for every new vector of coefficients $\beta$, which would lead
to an excessive computational burden for every proposal made, we simply
make a single descent step along the gradient of the cost function,\vspace{-0.2cm}
\begin{equation}
g(x)=k(x,x)-2k(x,y)-2\sum_{i=1}^{n}\beta_{i}\left[k(x,z_{i})-\muz(x)\right],\label{eq: g_function}
\end{equation}
i.e., the proposed new point is
\[
x^{*}=y-\eta\nabla_{x}g(x)|_{x=y}+\xi,
\]
where $\eta$ is a gradient step size parameter and $\xi\sim\mathcal{N}(0,\gamma^{2}I)$
is an additional isotropic 'exploration' term \emph{after} the gradient
step. It will be useful to split the scaled gradient at $y$ into
two terms as $\eta\nabla_{x}g(x)|_{x=y}=\eta\left(a_{y}-M_{\mathbf{z},y}H\beta\right)$,
where $a_{y}=\nabla_{x}k(x,x)|_{x=y}-2\nabla_{x}k(x,y)|_{x=y}$, \vspace{-0.1cm}
\begin{equation}
\Mz=2\left[\nabla_{x}k(x,z_{1})|_{x=y},\ldots,\nabla_{x}k(x,z_{n})|_{x=y}\right]\label{eq: Mzy matrix}
\end{equation}
 is a $d\times n$ matrix, and $H=I-\frac{1}{n}\mathbf{1}_{n\times n}$
is the $n\times n$ centering matrix.

Figure \ref{fig: gPlots} plots $g(x)$ and its gradients for several
samples of $\beta$-coefficients, in the case where the underlying
$\mathbf{z}$-samples are from the two-dimensional nonlinear Banana
target distribution of \citet{Haario1999}. It can be seen that $g$
may have multiple local minima, and that it varies most along the
high-density regions of the Banana distribution.

\begin{figure}
\begin{centering}
\includegraphics[bb=15bp 15bp 201bp 104bp,clip]{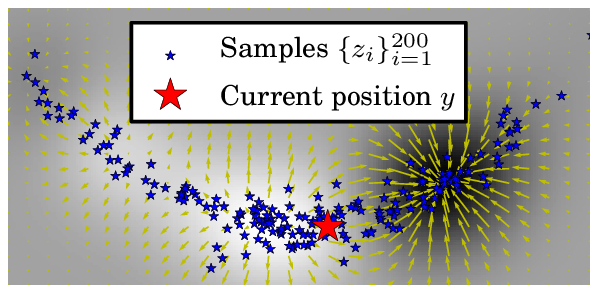}
\par\end{centering}

\begin{centering}
\includegraphics[bb=15bp 15bp 201bp 104bp,clip]{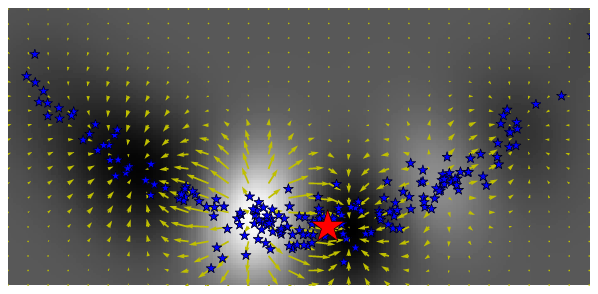}
\par\end{centering}

\caption{\label{fig: gPlots}Heatmaps (white denotes large) and gradients of
$g(x)$ for two samples of $\beta$ and fixed $\mathbf{z}$.}

\end{figure}

\section{\label{sec:Kernel-adaptive-proposal}MCMC Kameleon Algorithm}

\begin{algorithm}
\textbf{MCMC Kameleon}

\emph{Input}: unnormalized target $\pi$, subsample size $n$, scaling
parameters $\nu,\gamma,$ adaptation probabilities $\left\{ p_{t}\right\} _{t=0}^{\infty}$,
kernel $k$, 
\begin{itemize}
\item At iteration $t+1$,

\begin{enumerate}
\item With probability $p_{t}$, update a random subsample $\mathbf{z}=\left\{ z_{i}\right\} _{i=1}^{\min(n,t)}$
of the chain history $\left\{ x_{i}\right\} _{i=0}^{t-1}$,
\item Sample proposed point $x^{*}$ from $q_{\mathbf{z}}(\cdot|x_{t})=\mathcal{N}(x_{t},\gamma^{2}I+\nu^{2}M_{\mathbf{z},x_{t}}HM_{\mathbf{z},x_{t}}^{\top})$,
where $M_{{\bf z},x_{t}}$is given in Eq. \eqref{eq: Mzy matrix}
and $H=I-\frac{1}{n}\mathbf{1}_{n\times n}$ is the centering matrix, 
\item Accept/Reject with the Metropolis-Hastings acceptance probability
$A(x_{t},x^{*})$ in Eq. \eqref{eq:MetAcceptProb},\vspace{-0.5cm}

\begin{eqnarray*}
x_{t+1} & = & \begin{cases}
x^{*}, & \textrm{w.p.}\; A(x_{t},x^{*}),\\
x_{t}, & \textrm{w.p.}\;1-A(x_{t},x^{*}).
\end{cases}
\end{eqnarray*}
\end{enumerate}
\end{itemize}
\end{algorithm}

\subsection{Proposal Distribution}

We now have a recipe to construct a proposal that is able to adapt
to the local covariance structure for the current chain state $y$.
This proposal depends on a subset of the chain history $\mathbf{z}$,
and is denoted by $q_{\mathbf{z}}(\cdot|y)$. While we will later
simplify this proposal by integrating out the moves in the RKHS, it
is instructive to think of the proposal generating process as:
\begin{enumerate}
\item Sample $\beta\sim\mathcal{N}(0,\nu^{2}I)$ ($n\times1$ normal of
RKHS coefficients). 

\begin{itemize}
\item This represents an RKHS sample $f=k(\cdot,y)+\sum_{i=1}^{n}\beta_{i}\left[k(\cdot,z_{i})-\muz\right]$
which is the goal of the cost function $g(x)$.
\end{itemize}
\item Move along the gradient of $g$: $x^{*}=y-\eta\nabla_{x}g(x)|_{x=y}+\xi.$ 

\begin{itemize}
\item This gives a proposal $x^{*}|y,\beta\sim\mathcal{N}(y-\eta a_{y}+\eta M_{\mathbf{z},y}H\beta,\gamma^{2}I)$
($d\times1$ normal in the original space).
\end{itemize}
\end{enumerate}
Our first step in the derivation of the explicit proposal density
is to show that as long as $k$ is a differentiable positive definite
kernel, the term $a_{y}$ vanishes. 
\begin{prop}
Let $k$ be a differentiable positive definite kernel. Then \textup{$a_{y}=\nabla_{x}k(x,x)|_{x=y}-2\nabla_{x}k(x,y)|_{x=y}=0$.}
\end{prop}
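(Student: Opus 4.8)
The plan is to carefully separate the two partial derivatives hidden inside each term and then exploit the symmetry $k(x,x')=k(x',x)$ that every positive definite kernel enjoys. Write $\nabla_{1}k(x,x')$ and $\nabla_{2}k(x,x')$ for the gradients of $k$ with respect to its first and second arguments respectively; differentiability of $k$ guarantees both exist. Note that positive definiteness will enter only through the symmetry of $k$, so the hypothesis is used rather weakly.

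First I would treat the diagonal term $\nabla_{x}k(x,x)|_{x=y}$. Here the single variable $x$ appears in \emph{both} slots, so by the chain rule $\nabla_{x}k(x,x)=\nabla_{1}k(x,x)+\nabla_{2}k(x,x)$, and evaluating at $x=y$ gives $\nabla_{1}k(y,y)+\nabla_{2}k(y,y)$. By contrast, in the second term $\nabla_{x}k(x,y)|_{x=y}$ the second slot is frozen at $y$ \emph{before} differentiating, so only the first argument is varied: this term equals $\nabla_{1}k(y,y)$.

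Next I would invoke symmetry. Differentiating the identity $k(x,x')=k(x',x)$ with respect to the first argument on the left gives $\nabla_{1}k(x,x')=\nabla_{2}k(x',x)$; setting $x=x'=y$ yields $\nabla_{1}k(y,y)=\nabla_{2}k(y,y)$. Substituting this into the diagonal expression collapses it to $\nabla_{x}k(x,x)|_{x=y}=2\nabla_{1}k(y,y)$, so that $a_{y}=2\nabla_{1}k(y,y)-2\nabla_{1}k(y,y)=0$, as claimed.

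The computation is short, and the only genuine subtlety—the step I expect to require the most care—is the bookkeeping: keeping straight the order of the ``set both arguments equal'' operation (which produces two gradient contributions) versus the ``freeze the second argument at $y$'' operation (which produces only one). This ordering is precisely what distinguishes the two terms, and getting it right is the whole content of the proposition; no property of $k$ beyond symmetry and differentiability is needed.
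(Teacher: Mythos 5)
Your proof is correct, but it takes a genuinely different and more elementary route than the paper's. You work directly in $\mathbb{R}^{d}$ with the partial gradients $\nabla_{1}k$ and $\nabla_{2}k$: the chain rule applied to the diagonal map $x\mapsto(x,x)$ gives $\nabla_{x}k(x,x)|_{x=y}=\nabla_{1}k(y,y)+\nabla_{2}k(y,y)$, and differentiating the symmetry identity $k(x,x')=k(x',x)$ collapses this to $2\nabla_{1}k(y,y)$, which cancels the second term. The paper instead lifts the computation into the RKHS: it writes $k(x,x)=\left\Vert \varphi(x)\right\Vert _{\mathcal{H}}^{2}$ and $k(x,y)=\left\langle \varphi(x),\varphi(y)\right\rangle _{\mathcal{H}}$, applies the chain rule for Fr\'{e}chet derivatives through the feature map $\varphi$, and observes that $D\tau(y)=2\left\langle \varphi(y),D\varphi(y)(\cdot)\right\rangle _{\mathcal{H}}=2D\kappa_{y}(y)$; there the factor of $2$ comes from differentiating the squared norm, which is the RKHS incarnation of the same symmetry you exploit. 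Your version buys simplicity and slightly more generality --- as you note, only symmetry and (total) differentiability of $k$ on $\mathcal{X}\times\mathcal{X}$ are used, whereas the paper's argument invokes positive definiteness to obtain the feature map and cites the existence of $D\varphi$ for differentiable kernels. The paper's version, in exchange, makes explicit that the cancellation is a statement about the geometry of the feature map, which fits the RKHS narrative of the rest of the section. Your identification of the one real subtlety --- that the diagonal term produces two gradient contributions while the frozen-argument term produces one --- is exactly right, and your bookkeeping of it is correct.
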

Since $a_{y}=0$, the gradient step size $\eta$ always appears together
with $\beta$, so we merge $\eta$ and the scale $\nu$ of the $\beta$-coefficients
into a single scale parameter, and set $\eta=1$ henceforth. Furthermore,
since both $p(\beta)$ and $p_{\mathbf{z}}(x^{*}|y,\beta)$ are multivariate
Gaussian densities, the proposal density $q_{\mathbf{z}}(x^{*}|y)=\int p(\beta)p_{\mathbf{z}}(x^{*}|y,\beta)d\beta$
can be computed analytically. We therefore get the following closed
form expression for the proposal distribution.
\begin{prop}
\textup{$q_{\mathbf{z}}(\cdot|y)=\mathcal{N}(y,\gamma^{2}I+\nu^{2}\Mz H\Mz^{\top})$.}
\end{prop}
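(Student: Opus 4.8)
The plan is to integrate the RKHS coefficients $\beta$ out of the two-stage generative description of the proposal, exploiting the fact that both stages are Gaussian and linearly coupled. First I would invoke the preceding proposition to discard the term $a_{y}$, and set $\eta=1$ as the text prescribes, so that the conditional proposal reduces to $x^{*}\mid y,\beta\sim\mathcal{N}(y+\Mz H\beta,\gamma^{2}I)$, while the prior on the coefficients remains $\beta\sim\mathcal{N}(0,\nu^{2}I)$.

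Next I would recognize that, conditionally on $\beta$, the point $x^{*}$ is an affine-Gaussian image of $\beta$ perturbed by independent isotropic noise; equivalently, $x^{*}=y+\Mz H\beta+\xi$ with $\xi\sim\mathcal{N}(0,\gamma^{2}I)$ independent of $\beta$. The marginal of $x^{*}$ is then again Gaussian, which I would justify either by the standard linear-Gaussian marginalization identity (the sum of an affine transform of a Gaussian and an independent Gaussian is Gaussian) or, more explicitly, by evaluating the convolution $q_{\mathbf{z}}(x^{*}\mid y)=\int p(\beta)\,p_{\mathbf{z}}(x^{*}\mid y,\beta)\,d\beta$ and completing the square in $\beta$.

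It then remains to compute the two moments. Since $\mathbb{E}\beta=0$, the mean is simply $\mathbb{E}x^{*}=y$. For the covariance, independence of $\beta$ and $\xi$ gives $\mathrm{Cov}(x^{*})=\Mz H(\nu^{2}I)H^{\top}\Mz^{\top}+\gamma^{2}I$. The last simplification uses the defining properties of the centering matrix $H=I-\frac{1}{n}\mathbf{1}_{n\times n}$: it is symmetric and idempotent, so that $HH^{\top}=H^{2}=H$, yielding $\mathrm{Cov}(x^{*})=\gamma^{2}I+\nu^{2}\Mz H\Mz^{\top}$, which is exactly the claimed covariance.

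There is no genuine obstacle here, as the statement is a routine Gaussian marginalization; the only points requiring care are confirming that $a_{y}$ has indeed been eliminated (licensed by the previous proposition, together with absorbing $\eta$ into the scale) and correctly applying $H^{2}=H$ rather than carrying a spurious extra factor of $H$ in the covariance.
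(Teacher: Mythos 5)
Your proof is correct, but it takes a genuinely different and more elementary route than the paper. The paper evaluates the marginalization integral $q_{\mathbf{z}}(x^{*}|y)=\int p(\beta)p_{\mathbf{z}}(x^{*}|y,\beta)\,d\beta$ explicitly: it completes the square in $\beta$, applies the standard Gaussian integral to obtain the exponent $-\frac{1}{2}(x^{*}-y)^{\top}R^{-1}(x^{*}-y)$ with $R^{-1}=\frac{1}{\gamma^{2}}\bigl(I-\frac{1}{\gamma^{2}}\Mz H\Sigma H\Mz^{\top}\bigr)$, and then invokes the Woodbury identity to simplify $R$ to $\gamma^{2}I+\nu^{2}\Mz H\Mz^{\top}$. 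You instead note that $x^{*}=y+\Mz H\beta+\xi$ is an affine image of independent Gaussians, so the marginal is Gaussian by the standard linear-Gaussian closure property, and you read off the moments directly, using $HH^{\top}=H^{2}=H$ in place of Woodbury. Both arguments are sound and reach the same covariance; yours is shorter and avoids the matrix-inversion lemma entirely, while the paper's computation produces the explicit normalized density (including the determinant factors) as a by-product, making the Gaussianity of the marginal manifest from the functional form rather than cited as a known closure property. Your handling of the preliminary reductions --- discarding $a_{y}$ via the preceding proposition and absorbing $\eta$ into $\nu$ --- matches the paper exactly.
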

Proofs of the above Propositions are given in Appendix \ref{sec:Proofs}. 

\begin{figure}
\begin{centering}
\includegraphics{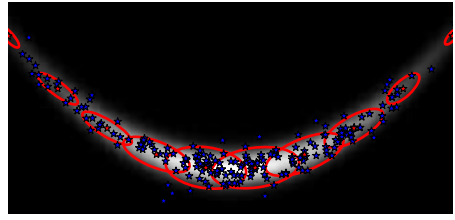}\vspace{-0.5cm}
\par\end{centering}

\caption{\label{fig: proposal_contours}$95\%$ contours (red) of proposal
distributions evaluated at a number of points, for the first two dimensions
of the banana target of \citet{Haario1999}. Underneath is the density
heatmap, and the samples (blue) used to construct the proposals.}
 
\end{figure}

With the derived proposal distribution, we proceed with the standard
Metropolis-Hastings accept/reject scheme, where the proposed sample
$x^{*}$ is accepted with probability\vspace{-0.2cm} 

\begin{eqnarray}
A(x_{t},x^{*}) & = & \min\left\{ 1,\frac{\pi(x^{*})q_{\mathbf{z}}(x_{t}|x^{*})}{\pi(x_{t})q_{\mathbf{z}}(x^{*}|x_{t})}\right\} ,\label{eq:MetAcceptProb}
\end{eqnarray}
giving rise to the MCMC Kameleon Algorithm. Note that each $\pi(x^{*})$
and $\pi(x_{t})$ could be replaced by their unbiased estimates without
impacting the invariant distribution \citep{andrieu2009pseudo}.\vspace{-0.2cm}

The constructed family of proposals encodes local structure of the
target distribution, which is learned based on the subsample $\mathbf{z}$.
Figure \ref{fig: proposal_contours} depicts the regions that contain
95\% of the mass of the proposal distribution $q_{\mathbf{z}}(\cdot|y)$
at various states $y$ for a fixed subsample $\mathbf{z}$, where
the Banana target is used (details in Section \ref{sec:Experiments}).
More examples of proposal contours can be found in Appendix \ref{sec:Proposal-Contours-for}.

\subsection{Properties of the Algorithm}

\paragraph{The update schedule and convergence.}

MCMC Kameleon requires a subsample $\mathbf{z}=\left\{ z_{i}\right\} _{i=1}^{n}$
at each iteration of the algorithm, and the proposal distribution
$q_{\mathbf{z}}(\cdot|y)$ is updated each time a new subsample $\mathbf{z}$
is obtained. It is well known that a chain which keeps adapting the
proposal distribution need not converge to the correct target \citep{Andrieu08}.
To guarantee convergence, we introduce adaptation probabilities $\left\{ p_{t}\right\} _{t=0}^{\infty}$,
such that $p_{t}\to0$ and $\sum_{t=1}^{\infty}p_{t}=\infty$, and
at iteration $t$ we update the subsample $\mathbf{z}$ with probability
$p_{t}$. As adaptations occur with decreasing probability, Theorem
1 of \citet{RobertsRosenthal2007} implies that the resulting algorithm
is ergodic and converges to the correct target. Another straightforward
way to guarantee convergence is to fix the set $\mathbf{z}=\left\{ z_{i}\right\} _{i=1}^{n}$
after a ``burn-in'' phase; i.e., to stop adapting \citet[Proposition 2]{RobertsRosenthal2007}.
In this case, a ``burn-in'' phase is used to get a rough sketch
of the shape of the distribution: the initial samples need not come
from a converged or even valid MCMC chain, and it suffices to have
a scheme with good exploratory properties, e.g., \citet{welling2011bayesian}.
In MCMC Kameleon, the term $\gamma$ allows exploration in the initial
iterations of the chain (while the subsample $\mathbf{z}$ is still
not informative about the structure of the target) and provides regularization
of the proposal covariance in cases where it might become ill-conditioned.
Intuitively, a good approach to setting $\gamma$ is to slowly decrease
it with each adaptation, such that the learned covariance progressively
dominates the proposal.\vspace{-0.2cm}

\paragraph{Symmetry of the proposal.}

In \citet{Haario2001}, the proposal distribution is asymptotically
symmetric due to the vanishing adaptation property. Therefore, the
authors compute the standard Metropolis acceptance probability. In
our case, the proposal distribution is a Gaussian with mean at the
current state of the chain $x_{t}=y$ and covariance $\gamma^{2}I+\nu^{2}\Mz H\Mz^{\top}$,
where $\Mz$ depends both on the current state $y$ and a random subsample
$\mathbf{z}=\left\{ z_{i}\right\} _{i=1}^{n}$ of the chain history
$\left\{ x_{i}\right\} _{i=0}^{t-1}$. This proposal distribution
is never symmetric (as covariance of the proposal always depends on
the current state of the chain), and therefore we use the Metropolis-Hastings
acceptance probability to reflect this.\vspace{-0.2cm}

\paragraph{Relationship to MALA and Manifold MALA.}

The Metropolis Adjusted Langevin Algorithm (MALA) algorithm uses information
about the gradient of the log-target density at the current chain
state to construct a proposed point for the Metropolis step. Our approach
does not require that the log-target density gradient be available
or computable. Kernel gradients in the matrix $\Mz$ are easily obtained
for commonly used kernels, including the Gaussian kernel (see section
\ref{sec:Examples-of-covariance}), for which the computational complexity
is equal to evaluating the kernel itself. Moreover, while standard
MALA simply shifts the mean of the proposal distribution along the
gradient and then adds an isotropic exploration term, our proposal
is centered at the current state, and it is the covariance structure
of the proposal distribution that coerces the proposed points to belong
to the high-density regions of the target. It would be straightforward
to modify our approach to include a drift term along the gradient
of the log-density, should such information be available, but it is
unclear whether this would provide additional performance gains. Further
work is required to elucidate possible connections between our approach
and the use of a preconditioning matrix \citep{Roberts2003} in the
MALA proposal; i.e., where the exploration term is scaled with appropriate
metric tensor information, as in Riemannian manifold MALA \citep{RSSB:RSSB765}.\vspace{-0.2cm}

\subsection{\label{sec:Examples-of-covariance}Examples of Covariance Structure
for Standard Kernels}

The proposal distributions in MCMC Kameleon are dependant on the choice
of the kernel $k$. To gain intuition regarding their covariance structure,
we give two examples below.

\paragraph{Linear kernel. }

In the case of a linear kernel $k(x,x')=x^{\top}x'$, we obtain $\Mz=2\left[\nabla_{x}x^{\top}z_{1}|_{x=y},\ldots,\nabla_{x}x^{\top}z_{n}|_{x=y}\right]=2\mathbf{Z}^{\top}$,
so the proposal is given by $q_{\mathbf{z}}(\cdot|y)=\mathcal{N}(y,\gamma^{2}I+4\nu^{2}\mathbf{Z}^{\top}H\mathbf{Z})$;
thus, the proposal simply uses the scaled empirical covariance $\mathbf{Z}^{\top}H\mathbf{Z}$
just like standard Adaptive Metropolis \citep{Haario1999}, with an
additional isotropic exploration component, and depends on $y$ only
through the mean.

\paragraph{Gaussian kernel. }

In the case of a Gaussian kernel $k(x,x')=\exp\left(-\frac{\left\Vert x-x'\right\Vert _{2}^{2}}{2\sigma^{2}}\right)$,
since $\nabla_{x}k(x,x')=\frac{1}{\sigma^{2}}k(x,x')(x'-x)$, we obtain\vspace{-0.2cm}
\[
\Mz=\frac{2}{\sigma^{2}}\left[k(y,z_{1})(z_{1}-y),\ldots,k(y,z_{n})(z_{n}-y)\right].
\]
Consider how this encodes the covariance structure of the target distribution:\vspace{-0.1cm}
{\small 
\begin{eqnarray}
R_{ij} & = & \gamma^{2}\delta_{ij}\nonumber \\
 & + & \frac{4\nu^{2}(n-1)}{\sigma^{4}n}\sum_{a=1}^{n}\left[k(y,z_{a})\right]^{2}(z_{a,i}-y_{i})(z_{a,j}-y_{j})\nonumber \\
 & - & \frac{4\nu^{2}}{\sigma^{4}n}\sum_{a\neq b}k(y,z_{a})k(y,z_{b})(z_{a,i}-y_{i})(z_{b,j}-y_{j}).\label{eq: Rgaussian}
\end{eqnarray}
}As the first two terms dominate, the previous points $z_{a}$ which
are close to the current state $y$ (for which $k(y,z_{a})$ is large)
have larger weights, and thus they have more influence in determining
the covariance of the proposal at $y$.

\paragraph{Mat\'{e}rn kernel.}

In the Mat\'{e}rn family of kernels $k_{\vartheta,\rho}(x,x')=\frac{2^{1-\vartheta}}{\Gamma(\vartheta)}\left(\frac{\left\Vert x-x'\right\Vert _{2}}{\rho}\right)^{\vartheta}K_{\vartheta}\left(\frac{\left\Vert x-x'\right\Vert _{2}}{\rho}\right)$,
where $K_{\vartheta}$ is the modified Bessel function of the second
kind, we obtain a form of the covariance structure very similar to
that of the Gaussain kernel. In this case, $\nabla_{x}k_{\vartheta,\rho}(x,x')=\frac{1}{2\rho^{2}(\vartheta-1)}k_{\vartheta-1,\rho}(x,x')(x'-x)$,
so the only difference (apart from the scalings) to \eqref{eq: Rgaussian}
is that the weights are now determined by a ``rougher'' kernel $k_{\vartheta-1,\rho}$
of the same family. \vspace{-0.2cm} 

\begin{figure}
\begin{centering}
\includegraphics[width=2in]{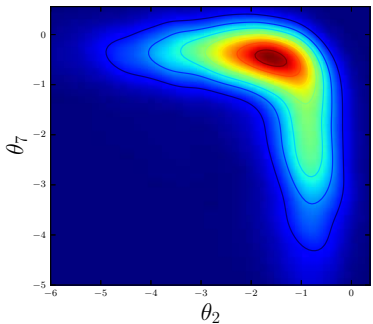}\vspace{-0.5cm}
\par\end{centering}

\caption{\label{fig:posterior2vs7}Dimensions 2 and 7 of the marginal hyperparameter
posterior on the UCI Glass dataset}
\end{figure}

\section{\label{sec:Experiments}Experiments}

In the experiments, we compare the following samplers: \textbf{(SM)}
Standard Metropolis with the isotropic proposal $q(\cdot|y)=\mathcal{N}(y,\nu^{2}I)$
and scaling $\nu=2.38/\sqrt{d}$, \textbf{(AM-FS)} Adaptive Metropolis
with a learned covariance matrix and fixed scaling $\nu=2.38/\sqrt{d}$,
\textbf{(AM-LS)} Adaptive Metropolis with a learned covariance matrix
and scaling learned to bring the acceptance rate close to $\alpha^{*}=0.234$
as described in \citet[Algorithm 4]{Andrieu08}, and \textbf{(KAMH-LS)}
MCMC Kameleon with the scaling $\nu$ learned in the same fashion
($\gamma$ was fixed to 0.2), and which also stops adapting the proposal
after the burn-in of the chain (in all experiments, we use a random
subsample \textbf{$\mathbf{z}$} of size $n=1000$, and a Gaussian
kernel with bandwidth selected according to the median heuristic).
We consider the following nonlinear targets: (1) the posterior distribution
of Gaussian Process (GP) classification hyperparameters \citep{FilipponeIEEETPAMI13}
on the UCI glass dataset, and (2) the synthetic banana-shaped distribution
of \citet{Haario1999} and a flower-shaped disribution concentrated
on a circle with a periodic perturbation.\vspace{-0.2cm}

\subsection{Pseudo-Marginal MCMC for GP Classification}

\begin{figure*}
\begin{centering}
\includegraphics[width=2.8in]{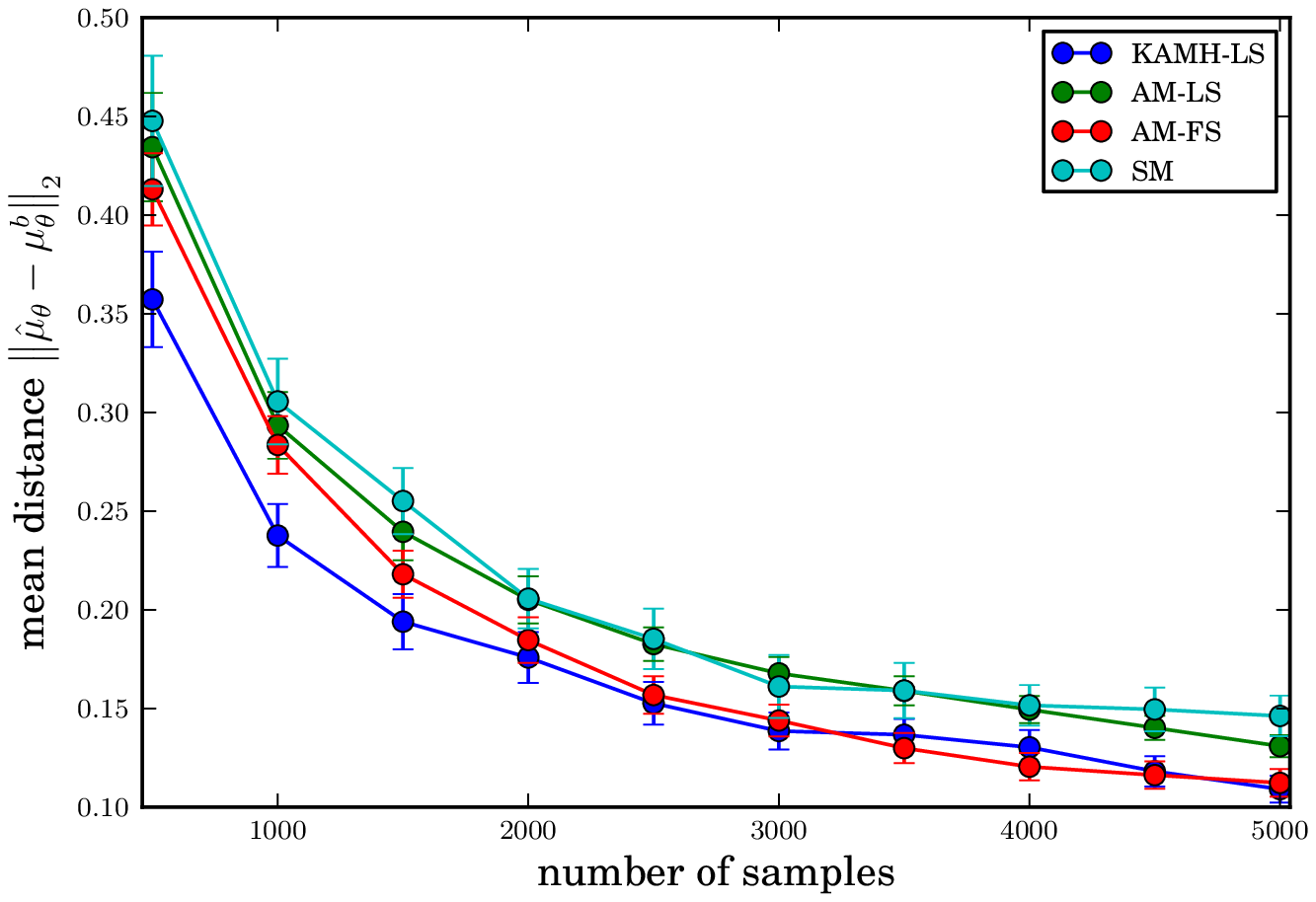}\includegraphics[width=2.8in]{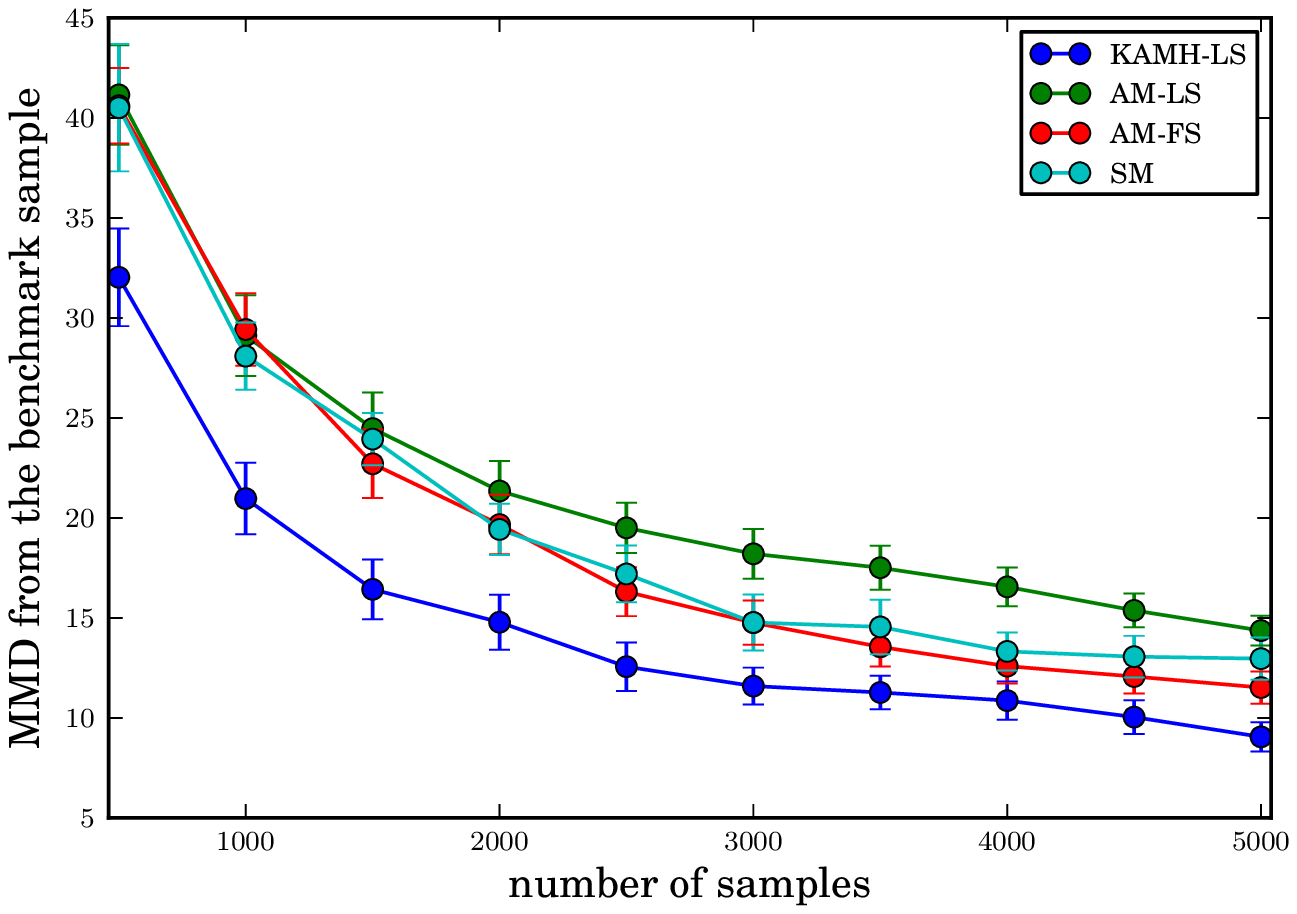}
\par\end{centering}

\caption{\label{fig:glass_comparison}The comparison of \textbf{SM}, \textbf{AM-FS},
\textbf{AM-LS} and \textbf{KAMH-LS} in terms of the distance between
the estimated mean and the mean on the benchmark sample (left) and
in terms of the maximum mean discrepancy to the benchmark sample (right).
The results are averaged over 30 chains for each sampler. Error bars
represent $80\%$-confidence intervals.}
\end{figure*}

\begin{figure*}
\begin{centering}
\textbf{Moderately twisted 8-dimensional} $\mathbf{\mathcal{B}(0.03,100)}$
\textbf{target; iterations: 40000, burn-in: 20000}
\par\end{centering}

\begin{centering}
\includegraphics{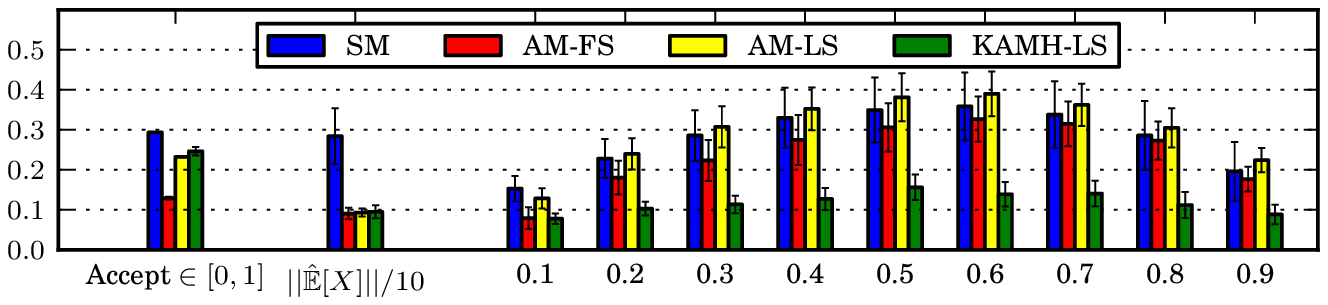}
\par\end{centering}

\begin{centering}
\textbf{Strongly twisted 8-dimensional $\mathbf{\mathcal{B}(0.1,100)}$
target; iterations: 80000, burn-in: 40000}
\par\end{centering}

\begin{centering}
\includegraphics{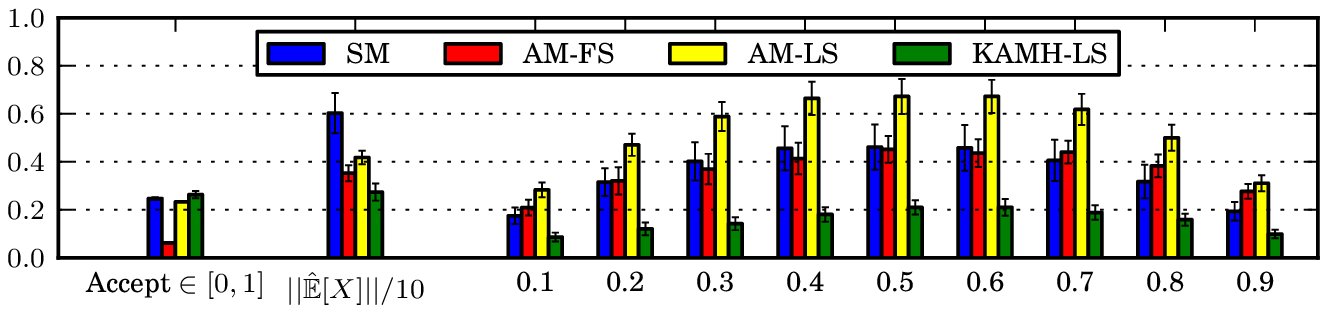}
\par\end{centering}

\begin{centering}
\textbf{8-dimensional }$\mathbf{\mathcal{F}(10,6,6,1)}$\textbf{ target;
iterations: 120000, burn-in: 60000}
\par\end{centering}

\begin{centering}
\includegraphics{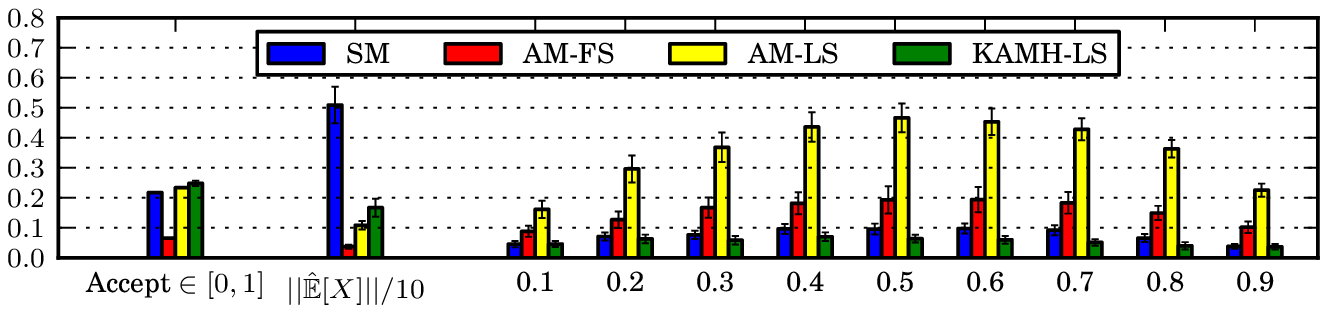}
\par\end{centering}

\centering{}\caption{\label{fig:Results-for-three}Results for three nonlinear targets,
averaged over 20 chains for each sampler. \emph{Accept} is the acceptance
rate scaled to the interval $[0,1]$. The norm of the mean $||\hat{\mathbb{E}}[X]||$
is scaled by 1/10 to fit into the figure scalling, and the bars over
the $0.1,\dots,0.9$-quantiles represent the deviation from the exact
quantiles, scaled by 10; i.e., $0.1$ corresponds to 1\% deviation.
Error bars represent $80\%$-confidence intervals.}
\end{figure*}

In the first experiment, we illustrate usefulness of the MCMC Kameleon
sampler in the context of Bayesian classification with GPs \citep{Williams1998}.
Consider the joint distribution of latent variables $\mathbf{f}$,
labels $\mathbf{y}$ (with covariate matrix \textbf{$X$}), and hyperparameters
$\theta$, given by
\[
p(\mathbf{f},\mathbf{y},\theta)=p(\theta)p(\mathbf{f}|\theta)p(\mathbf{y}|\mathbf{f}),
\]
where $\mathbf{f}|\theta\sim{\cal N}(0,\mathcal{K}_{\theta})$, with
$\mathcal{K}_{\theta}$ modeling the covariance between latent variables
evaluated at the input covariates: $(\mathcal{K}_{\theta})_{ij}=\kappa(\mathbf{x}_{i},\mathbf{x}_{j}'|\theta)=\exp\left(-\frac{1}{2}\sum_{d=1}^{D}\frac{(x_{i,d}-x'_{j,d})^{2}}{\ell_{d}^{2}}\right)$
and $\theta_{d}=\log\ell_{d}^{2}$. We restrict our attention to the
binary logistic classifier; i.e., the likelihood is given by $p(y_{i}|f_{i})=\frac{1}{1-\exp(-y_{i}f_{i})}$
where $y_{i}\in\{-1,1\}$. We pursue a fully Bayesian treatment, and
estimate the posterior of the hyperparameters $\theta$. As observed
by \citet{Murray2012}, a Gibbs sampler on $p(\theta,\mathbf{f}|y)$,
which samples from $p(\mathbf{f}|\theta,y)$ and $p(\theta|\mathbf{f},y)$
in turn, is problematic, as $p(\theta|\mathbf{f},y)$\textbf{ }is
extremely sharp, drastically limiting the amount that any Markov chain
can update $\theta|\mathbf{f},y$. On the other hand, if we directly
consider the marginal posterior $p(\theta|\mathbf{y})\propto p(\mathbf{y}|\theta)p(\theta)$
of the hyperparameters, a much less peaked distribution can be obtained.
However, the marginal likelihood $p(\mathbf{y}|\theta)$ is intractable
for non-Gaussian likelihoods $p(\mathbf{y}|\mathbf{f})$, so it is
not possible to analytically integrate out the latent variables. Recently
developed pseudo-marginal MCMC methods \citep{andrieu2009pseudo}
enable \emph{exact} inference on this problem \citep{FilipponeIEEETPAMI13},
by replacing $p(\mathbf{y}|\theta)$ with an unbiased estimate

\begin{equation}
\hat{p}(\mathbf{y}|\theta):=\frac{1}{n_{\textrm{imp}}}\sum_{i=1}^{n_{\textrm{imp}}}p(\mathbf{y}|\mathbf{f}^{(i)})\frac{p(\mathbf{f}^{(i)}|\theta)}{q(\mathbf{f}^{(i)}|\theta)},
\end{equation}
where $\left\{ \mathbf{f}^{(i)}\right\} _{i=1}^{n_{\textrm{imp}}}\sim q(\mathbf{f}|\theta)$
are $n_{imp}$ importance samples. In \citet{FilipponeIEEETPAMI13},
the importance distribution $q(\mathbf{f}|\theta)$ is chosen as the
Laplacian or as the Expectation Propagation (EP) approximation of
$p(\mathbf{f}|\mathbf{y},\theta)\propto p(\mathbf{y}|\mathbf{f})p(\mathbf{f}|\theta)$,
leading to state-of-the-art results.\vspace{-0.2cm}

We consider the UCI Glass dataset \citep{Bache+Lichman:2013}, where
classification of window against non-window glass is sought. Due to
the heterogeneous structure of each of the classes (i.e., non-window
glass consists of containers, tableware and headlamps), there is no
single consistent set of lengthscales determining the decision boundary,
so one expects the posterior of the covariance bandwidths $\theta_{d}$
to have a complicated (nonlinear) shape. This is illustrated by the
plot of the posterior projections to the dimensions 2 and 7 (out of
9) in Figure \ref{fig:posterior2vs7}. Since the ground truth for
the hyperparameter posterior is not available, we initially ran 30
Standard Metropolis chains for 500,000 iterations (with a 100,000
burn-in), kept every 1000-th sample in each of the chains, and combined
them. The resulting samples were used as a benchmark, to evaluate
the performance of shorter single-chain runs of \textbf{SM}, \textbf{AM-FS},
\textbf{AM-LS} and \textbf{KAMH-LS}. Each of these algorithms was
run for 100,000 iterations (with a 20,000 burnin) and every 20-th
sample was kept. Two metrics were used in evaluating the performance
of the four samplers, relative to the large-scale benchmark. First,
the distance $\left\Vert \hat{\mu}_{\theta}-\mu_{\theta}^{b}\right\Vert _{2}$
was computed between the mean $\hat{\mu}_{\theta}$ estimated from
each of the four sampler outputs, and the mean $\mu_{\theta}^{b}$
on the benchmark sample (Fig. \ref{fig:glass_comparison}, left),
as a function of sample size. Second, the MMD \citep{BorGreRasKrietal06,GreBorRasSchetal07c}
was computed between each sampler output and the benchmark sample,
using the polynomial kernel $\left(1+\left\langle \theta,\theta'\right\rangle \right)^{3}$;
i.e., the comparison was made in terms of all mixed moments of order
up to 3 (Fig. \ref{fig:glass_comparison}, right). The figures indicate
that \textbf{KAMH-LS} approximates the benchmark sample better than
the competing approaches, where the effect is especially pronounced
in the high order moments, indicating that \textbf{KAMH-LS} thoroughly
explores the distribution support in a relatively small number of
samples.\vspace{-0.2cm}

We emphasise that, as for \emph{any} pseudo-marginal MCMC scheme,
neither the likelihood itself, nor any higher-order information about
the marginal posterior target $p(\theta|\mathbf{y})$, are available.
This makes HMC or MALA based approaches such as \citep{Roberts2003,RSSB:RSSB765}
unsuitable for this problem, so it is very difficult to deal with
strongly nonlinear posterior targets. In contrast, as indicated in
this example, the MCMC Kameleon scheme is able to effectively sample
from such nonlinear targets, and outperforms the vanilla Metropolis
methods, which are the \emph{only} competing choices in the pseudo-marginal
context.

In addition, since the bulk of the cost for pseudo-marginal MCMC is
in importance sampling in order to obtain the acceptance ratio, the
additional cost imposed by \textbf{KAMH-LS} is negligible. Indeed,
we observed that there is an increase of only 2-3\% in terms of effective
computation time in comparison to all other samplers, for the chosen
size of the chain history subsample ($n=1000$). \vspace{-0.2cm}

\subsection{Synthetic examples}

\paragraph{Banana target. }

In \citet{Haario1999}, the following family of nonlinear target distributions
is considered. Let $X\sim\mathcal{N}(0,\Sigma)$ be a multivariate
normal in $d\geq2$ dimensions, with $\Sigma=\text{{diag}}(v,1,\ldots,1)$,
which undergoes the transformation $X\to Y$, where $Y_{2}=X_{2}+b(X_{1}^{2}-v)$,
and $Y_{i}=X_{i}$ for $i\neq2$. We will write $Y\sim\mathcal{B}(b,v)$.
It is clear that $\mathbb{E}Y=0$, and that\vspace{-0.5cm} 
\[
\mathcal{B}(y;b,v)=\mathcal{N}(y_{1};0,v)\mathcal{N}(y_{2};b(y_{1}^{2}-v),1)\prod_{j=3}^{d}\mathcal{N}(y_{j};0,1).
\]
\vspace{-0.3cm}

\paragraph{Flower target. }

The second target distribution we consider is the $d$-dimensional
flower target $\mathcal{F}(r_{0},A,\omega,\sigma)$, with\vspace{-0.2cm}
{\small 
\begin{eqnarray*}
 &  & \mathcal{F}(x;r_{0},A,\omega,\sigma)=\\
 &  & \quad\exp\left(-\frac{\sqrt{x_{1}^{2}+x_{2}^{2}}-r_{0}-A\cos\left(\omega\textrm{atan2}\left(x_{2},x_{1}\right)\right)}{2\sigma^{2}}\right)\\
 &  & \qquad\qquad\times\prod_{j=3}^{d}\mathcal{N}(x_{j};0,1).
\end{eqnarray*}
}This distribution concentrates around the $r_{0}$-circle with a
periodic perturbation (with amplitude $A$ and frequency $\omega$)
in the first two dimensions. 

In these examples, exact quantile regions of the targets can be computed
analytically, so we can directly assess performance without the need
to estimate distribution distances on the basis of samples (i.e.,
by estimating MMD to the benchmark sample). We compute the following
measures of performance (similarly as in \citet{Haario1999,Andrieu08})
based on the chain after burn-in: average acceptance rate, norm of
the empirical mean (the true mean is by construction zero for all
targets), and the deviation of the empirical quantiles from the true
quantiles. We consider 8-dimensional target distributions: the moderately
twisted $\mathcal{B}(0.03,100)$ banana target (Figure \ref{fig:Results-for-three},
top) and the strongly twisted $\mathcal{B}(0.1,100)$ banana target
(Figure \ref{fig:Results-for-three}, middle) and $\mathcal{F}(10,6,6,1)$
flower target (Figure \ref{fig:Results-for-three}, bottom).

The results show that MCMC Kameleon is superior to the competing samplers.
Since the covariance of the proposal adapts to the local structure
of the target at the current chain state, as illustrated in Figure
\ref{fig: proposal_contours}, MCMC Kameleon does not suffer from
wrongly scaled proposal distributions. The result is a significantly
improved quantile performance in comparison to all competing samplers,
as well as a comparable or superior norm of the empirical mean. \textbf{SM}
has a significantly larger norm of the empirical mean, due to its
purely random walk behavior (e.g., the chain tends to get stuck in
one part of the space, and is not able to traverse both tails of the
banana target equally well). \textbf{AM} with fixed scale has a low
acceptance rate (indicating that the scaling of the proposal is too
large), and even though the norm of the empirical mean is much closer
to the true value, quantile performance of the chain is poor. Even
if the estimated covariance matrix closely resembles the true global
covariance matrix of the target, using it to construct proposal distributions
at every state of the chain may not be the best choice. For example,
\textbf{AM} correctly captures scalings along individual dimensions
for the flower target (the norm of its empirical mean is close to
its true value of zero) but fails to capture local dependence structure.
The flower target, due to its symmetry, has an isotropic covariance
in the first two dimensions -- even though they are highly dependent.
This leads to a mismatch in the scale of the covariance and the scale
of the target, which concentrates on a thin band in the joint space.
\textbf{AM-LS} has the ``correct'' acceptance rate, but the quantile
performance is even worse, as the scaling now becomes too small to
traverse high-density regions of the target.\vspace{-0.3cm}

\section{\label{sec:Conclusions}Conclusions}

We have constructed a simple, versatile, adaptive, gradient-free MCMC
sampler that constructs a family of proposal distributions based on
the sample history of the chain. These proposal distributions automatically
conform to the local covariance structure of the target distribution
at the current chain state. In experiments, the sampler outperforms
existing approaches on nonlinear target distributions, both by exploring
the entire support of these distributions, and by returning accurate
empirical quantiles, indicating faster mixing. Possible extensions
include incorporating additional parametric information about the
target densities, and exploring the tradeoff between the degree of
sub-sampling of the chain history and convergence of the sampler.

\paragraph{Software.}

Python implementation of MCMC Kameleon is available at \url{https://github.com/karlnapf/kameleon-mcmc}.

\paragraph{Acknowledgments.}

D.S., H.S., M.L.G. and A.G. acknowledge support of the Gatsby Charitable
Foundation. We thank Mark Girolami for insightful discussions and
the anonymous reviewers for useful comments.

\newpage{}

\small \vspace{-0.2cm}\bibliographystyle{icml2014}
\bibliography{bibfile,local}

\normalsize\onecolumn\setcounter{thm}{0}

\appendix

\section{\label{sec:Proofs}Proofs}
\begin{prop}
Let $k$ be a differentiable positive definite kernel. Then \textup{$\nabla_{x}k(x,x)|_{x=y}-2\nabla_{x}k(x,y)|_{x=y}=0$.}\end{prop}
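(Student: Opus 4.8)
The plan is to exploit the symmetry $k(x,x')=k(x',x)$ of a positive definite kernel together with the chain rule, while carefully distinguishing two operations that look deceptively similar: differentiating both copies of the argument (as in $k(x,x)$) versus differentiating only the first copy (as in $k(x,y)$, where the second slot is frozen at the constant $y$). To keep these separate I would write $\nabla_1 k(x,x')$ and $\nabla_2 k(x,x')$ for the gradients of $k$ with respect to its first and second arguments respectively; both exist since $k$ is assumed differentiable.

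First I would handle the diagonal term. Setting $h(x):=k(x,x)$ and applying the chain rule, since both slots depend on $x$, gives
\[
\nabla_x h(x)|_{x=y} = \nabla_1 k(x,x')\big|_{x=x'=y} + \nabla_2 k(x,x')\big|_{x=x'=y}.
\]
For the remaining term, freezing the second argument at the constant $y$ and differentiating only the first slot yields
\[
\nabla_x k(x,y)|_{x=y} = \nabla_1 k(x,x')\big|_{x=x'=y}.
\]

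The key step is symmetry. Differentiating the identity $k(x,x')=k(x',x)$ shows that $\nabla_2 k(x,x') = \nabla_1 k(x',x)$, and evaluating on the diagonal $x=x'=y$ gives $\nabla_2 k(x,x')\big|_{x=x'=y} = \nabla_1 k(x,x')\big|_{x=x'=y}$. Substituting this into the expression for $\nabla_x h$ collapses the two summands into one, so that
\[
\nabla_x k(x,x)|_{x=y} = 2\,\nabla_1 k(x,x')\big|_{x=x'=y} = 2\,\nabla_x k(x,y)|_{x=y},
\]
and therefore $a_y = \nabla_x k(x,x)|_{x=y}-2\nabla_x k(x,y)|_{x=y}=0$, as claimed.

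The main obstacle here is not analytic difficulty but notational bookkeeping: one must avoid conflating the total derivative of $k$ along the diagonal with the partial derivative in a single slot. Making the two-argument dependence explicit via $\nabla_1$ and $\nabla_2$ removes this ambiguity and reduces the entire claim to the elementary consequence of symmetry that the two partial gradients coincide on the diagonal.
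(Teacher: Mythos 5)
Your proof is correct, but it takes a genuinely different route from the paper. You rely only on the \emph{symmetry} $k(x,x')=k(x',x)$ of the kernel plus the multivariate chain rule: the total gradient along the diagonal splits into the two partial gradients, and symmetry forces these to coincide at $x=x'=y$, yielding the factor of $2$. The paper instead invokes positive definiteness to obtain a feature map $\varphi$ into a Hilbert space with $k(x,x')=\left\langle \varphi(x),\varphi(x')\right\rangle_{\mathcal{H}}$, writes $k(x,x)=\left\Vert \varphi(x)\right\Vert_{\mathcal{H}}^{2}$ and $k(x,y)=\left\langle \varphi(x),\varphi(y)\right\rangle_{\mathcal{H}}$, and applies the chain rule for Fr\'{e}chet derivatives to get $D\tau(y)=2\left\langle \varphi(y),D\varphi(y)(\cdot)\right\rangle_{\mathcal{H}}$ versus $D\kappa_{y}(y)=\left\langle \varphi(y),D\varphi(y)(\cdot)\right\rangle_{\mathcal{H}}$. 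Your argument is more elementary and in fact more general --- it never uses positive definiteness, only that $k$ is symmetric and (jointly) differentiable --- whereas the paper's argument stays inside the RKHS machinery used throughout and delegates the existence and differentiability of $\varphi$ to standard references. The one point to make explicit in your version is that the decomposition $\nabla_{x}k(x,x)=\nabla_{1}k+\nabla_{2}k$ on the diagonal requires $k$ to be differentiable as a function of the \emph{pair} $(x,x')$, not merely partially differentiable in each slot separately; under the paper's reading of ``differentiable kernel'' this is satisfied, so no gap remains.
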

\begin{proof}
Since $k$ is a positive definite kernel there exists a Hilbert space
$\mathcal{H}$ and a feature map $\varphi:\mathcal{\mathbb{R}}^{d}\to\mathcal{H},$
such that $k(x,x')=\left\langle \varphi(x),\varphi(x')\right\rangle _{\mathcal{H}}$.
Consider first the map $\tau:\mathbb{R}^{d}\to\mathbb{R}$, defined
by $\tau(x)=k(x,x)$. We write $\tau=\psi\circ\varphi$, where $\psi:\mathcal{H}\to\mathbb{R}$,
$\psi(f)=\left\Vert f\right\Vert _{\mathcal{H}}^{2}.$ We can obtain
$\nabla_{x}k(x,x)|_{x=y}$ from the Fr\'{e}chet derivative $D\tau(y)\in\mathcal{B}(\mathbb{R}^{d},\mathbb{R})$
of $\tau$ at $y$, which to each $y\in\mathbb{R}^{d}$ associates
a bounded linear operator from $\mathbb{R}^{d}$ to $\mathbb{R}$
\citep[Definition A.5.14]{Steinwart2008book}. By the chain rule for

Fr\'{e}chet derivatives \citep[Lemma A.5.15(b)]{Steinwart2008book},
the value of $D\tau(y)$ at some $x'\in\mathbb{R}^{d}$ is
\begin{eqnarray*}
\left[D\tau(y)\right](x') & = & \left[D\psi\left(\varphi(y)\right)\circ D\varphi(y)\right](x'),
\end{eqnarray*}
where $D\varphi(y)\in\mathcal{B}(\mathbb{R}^{d},\mathcal{H})$, and
$D\psi\left(\varphi(y)\right)\in\mathbb{\mathcal{B}}(\mathcal{H},\mathbb{R})$.
The derivative $D\varphi$ of the feature map exists whenever $k$
is a differentiable function \citep[Section 4.3]{Steinwart2008book}.
It is readily shown that $D\psi\left[\varphi(y)\right]=2\left\langle \varphi(y),\cdot\right\rangle _{\mathcal{H}}$,
so that 
\begin{eqnarray*}
\left[D\tau(y)\right](x') & = & 2\left\langle \varphi(y),\left[D\varphi(y)\right](x')\right\rangle _{\mathcal{H}}.
\end{eqnarray*}
Next, we consider the map $\kappa_{y}(x)=k(x,y)=\left\langle \varphi(x),\varphi(y)\right\rangle _{\mathcal{H}}$,
i.e., $\kappa_{y}=\psi_{y}\circ\varphi$ where $\psi_{y}(f)=\left\langle f,\varphi(y)\right\rangle _{\mathcal{H}}$.
Since $\psi_{y}$ is a linear scalar function on $\mathcal{H}$, $D\psi_{y}\left(f\right)=\left\langle \varphi(y),\cdot\right\rangle _{\mathcal{H}}$.
Again, by the chain rule:
\begin{eqnarray*}
\left[D\kappa_{y}(y)\right](x') & = & \left[D\psi_{y}\left(\varphi(y)\right)\circ D\varphi(y)\right](x')\\
 & = & \left\langle \varphi(y),\left[D\varphi(y)\right](x')\right\rangle _{\mathcal{H}},
\end{eqnarray*}
and thus $\left(D\tau(y)-2D\kappa_{y}(y)\right)(x')=0$, for all $x'\in\mathbb{R}^{d}$,
and we obtain equality of operators. Since Fr\'{e}chet derivatives
can also be written as inner products with the gradients, $\left(\nabla_{x}k(x,x)|_{x=y}-2\nabla_{x}k(x,y)|_{x=y}\right)^{\top}x'=\left(D\tau(y)-2D\kappa_{y}(y)\right)(x')=0$,
$\forall x'\in\mathbb{R}^{d}$, which proves the claim.
\end{proof}

\begin{prop}
\textup{$q_{\mathbf{z}}(\cdot|y)=\mathcal{N}(y,\gamma^{2}I+\nu^{2}\Mz H\Mz^{\top})$.}\end{prop}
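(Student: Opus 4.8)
The plan is to marginalise $\beta$ out of the two-stage generative description given just before the proposition. By the preceding proposition we have $a_y=0$, and having absorbed $\eta$ into the scale (setting $\eta=1$), the two stages read $\beta\sim\mathcal{N}(0,\nu^2 I)$ and $x^*\mid y,\beta\sim\mathcal{N}(y+\Mz H\beta,\gamma^2 I)$. This is a linear-Gaussian hierarchy: $x^*$ is an affine function of the Gaussian vector $\beta$ plus an independent Gaussian noise term $\xi\sim\mathcal{N}(0,\gamma^2 I)$, so the marginal $q_{\mathbf{z}}(x^*\mid y)=\int p(\beta)\,p_{\mathbf{z}}(x^*\mid y,\beta)\,d\beta$ is again Gaussian. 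It therefore suffices to compute its mean and covariance.

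For the mean, I would use the tower property: $\mathbb{E}[x^*]=\mathbb{E}_\beta\big[\mathbb{E}[x^*\mid y,\beta]\big]=\mathbb{E}_\beta[y+\Mz H\beta]=y$, since $\mathbb{E}[\beta]=0$ and $y,\Mz,H$ are fixed given the conditioning information. For the covariance, the law of total variance gives $\mathrm{Cov}(x^*)=\mathbb{E}_\beta\big[\mathrm{Cov}(x^*\mid y,\beta)\big]+\mathrm{Cov}_\beta\big(\mathbb{E}[x^*\mid y,\beta]\big)=\gamma^2 I+\Mz H\,\mathrm{Cov}(\beta)\,H^\top\Mz^\top=\gamma^2 I+\nu^2\,\Mz H H^\top\Mz^\top$, where the inner covariance term contributes $\gamma^2 I$ and the affine map $\beta\mapsto y+\Mz H\beta$ propagates $\mathrm{Cov}(\beta)=\nu^2 I$ in the usual quadratic fashion.

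The final step is to simplify $HH^\top$. The centering matrix $H=I-\frac{1}{n}\mathbf{1}_{n\times n}$ is symmetric and idempotent, i.e.\ $H^\top=H$ and $H^2=H$ (it is the orthogonal projection onto the subspace orthogonal to $\mathbf{1}_n$), so $HH^\top=H$. Substituting yields $\mathrm{Cov}(x^*)=\gamma^2 I+\nu^2\,\Mz H\Mz^\top$, which is exactly the claimed covariance, completing the identification of $q_{\mathbf{z}}(\cdot\mid y)$ with $\mathcal{N}(y,\gamma^2 I+\nu^2\Mz H\Mz^\top)$.

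There is no genuine obstacle here: the argument is a routine Gaussian marginalisation, and the only points needing care are recognising that $a_y=0$ (supplied by the previous proposition) removes the deterministic drift so that the mean is exactly $y$, and using the idempotence of $H$ to collapse $HH^\top$ to $H$. If one prefers to avoid the law of total variance, the same conclusion follows by writing the moment generating function of $x^*$ as the product of the two Gaussian transforms and reading off the linear and quadratic terms of the exponent.
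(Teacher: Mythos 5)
Your proof is correct, and it takes a genuinely different route from the paper's. The paper marginalises $\beta$ by brute force: it writes out the product $p(\beta)p_{\mathbf{z}}(x^{*}|y,\beta)$, completes the square in $\beta$, applies the standard Gaussian integral to get the exponent $-\frac{1}{2}(x^{*}-y)^{\top}R^{-1}(x^{*}-y)$ with $R^{-1}=\frac{1}{\gamma^{2}}\bigl(I-\frac{1}{\gamma^{2}}\Mz H\Sigma H\Mz^{\top}\bigr)$, and then inverts $R^{-1}$ via the Woodbury identity to reach $R=\gamma^{2}I+\nu^{2}\Mz H\Mz^{\top}$. You instead observe that $x^{*}=y+\Mz H\beta+\xi$ is an affine image of a Gaussian plus independent Gaussian noise, so the marginal is Gaussian and is pinned down by its first two moments, which the tower property and the law of total variance deliver immediately. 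Your route is shorter and avoids both the completion of the square and the matrix inversion lemma; the one structural fact you need beyond Gaussian closure is the symmetry and idempotence of the centering matrix $H$, which you state and which the paper also uses (implicitly, in its last Woodbury step, where $HH$ is collapsed to $H$). The paper's density-level computation would be the one to reach for if only the conditional densities were available rather than an explicit generative representation, but here that representation is given, so nothing is lost. Both arguments correctly rely on the preceding proposition $a_{y}=0$ to place the proposal mean at $y$.
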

\begin{proof}
We start with

\begin{eqnarray*}
p(\beta)p(x^{*}|y,\beta) & = & \frac{1}{\left(2\pi\right)^{\frac{n+d}{2}}\gamma^{d}\nu^{n}}\exp\left(-\frac{1}{2\nu^{2}}\beta^{\top}\beta\right)\\
 &  & \cdot\exp\left(-\frac{1}{2\gamma^{2}}\left(x^{*}-y-\Mz H\beta\right)^{\top}\left(x^{*}-y-\Mz H\beta\right)\right)\\
 & = & \frac{1}{\left(2\pi\right)^{\frac{n+d}{2}}\gamma^{d}\nu^{n}}\exp\left(-\frac{1}{2\gamma^{2}}\left(x^{*}-y\right)^{\top}\left(x^{*}-y\right)\right)\\
 &  & \cdot\exp\left(-\frac{1}{2}\left(\beta^{\top}\left(\frac{1}{\nu^{2}}I+\frac{1}{\gamma^{2}}H\Mz^{\top}\Mz H\right)\beta-\frac{2}{\gamma^{2}}\beta^{\top}H\Mz^{\top}(x^{*}-y)\right)\right).
\end{eqnarray*}
Now, we set
\begin{eqnarray*}
\Sigma^{-1} & = & \frac{1}{\nu^{2}}I+\frac{1}{\gamma^{2}}H\Mz^{\top}\Mz H\\
\mu & = & \frac{1}{\gamma^{2}}\Sigma H\Mz^{\top}(x^{*}-y),
\end{eqnarray*}
and application of the standard Gaussian integral
\begin{eqnarray*}
\int\exp\left(-\frac{1}{2}\left(\beta^{\top}\Sigma^{-1}\beta-2\beta^{\top}\Sigma^{-1}\mu\right)\right)d\beta & =\\
(2\pi)^{n/2}\sqrt{\det\Sigma}\exp\left(\frac{1}{2}\mu^{\top}\Sigma^{-1}\mu\right),
\end{eqnarray*}
leads to
\begin{eqnarray*}
q_{\mathbf{z}}(x^{*}|y) & = & \frac{\sqrt{\det\Sigma}}{\left(2\pi\right)^{\frac{d}{2}}\gamma^{d}\nu^{n}}\exp\left(-\frac{1}{2\gamma^{2}}\left(x^{*}-y\right)^{\top}\left(x^{*}-y\right)\right)\\
 &  & \cdot\exp\left(\frac{1}{2}\mu^{\top}\Sigma^{-1}\mu\right).
\end{eqnarray*}
This is just a $d$-dimensional Gaussian density where both the mean
and covariance will, in general, depend on $y$. Let us consider the
exponent
\begin{eqnarray*}
 &  & -\frac{1}{2\gamma^{2}}\left(x^{*}-y\right)^{\top}\left(x^{*}-y\right)+\frac{1}{2}\mu^{\top}\Sigma^{-1}\mu=\\
 &  & -\frac{1}{2}\Biggl\{\frac{1}{\gamma^{2}}\left(x^{*}-y\right)^{\top}\left(x^{*}-y\right)\\
 &  & \qquad-\frac{1}{\gamma^{4}}\left(x^{*}-y\right)^{\top}\Mz H\Sigma H\Mz^{\top}\left(x^{*}-y\right)\Biggr\}=\\
 &  & -\frac{1}{2}\left\{ \left(x^{*}-y\right)^{\top}R^{-1}\left(x^{*}-y\right)\right\} ,
\end{eqnarray*}
where $R^{-1}=\frac{1}{\gamma^{2}}(I-\frac{1}{\gamma^{2}}\Mz H\Sigma H\Mz^{\top})$.
We can simplify the covariance $R$ using the Woodbury identity to
obtain:
\begin{eqnarray*}
R & = & \gamma^{2}(I-\frac{1}{\gamma^{2}}\Mz H\Sigma H\Mz^{\top})^{-1}\\
 & = & \gamma^{2}\left(I+\Mz H\left(\gamma^{2}\Sigma^{-1}-H\Mz^{\top}\Mz H\right)^{-1}H\Mz^{\top}\right)\\
 & = & \gamma^{2}\left(I+\Mz H\left(\frac{\gamma^{2}}{\nu^{2}}I\right)^{-1}H\Mz^{\top}\right)\\
 & = & \gamma^{2}I+\nu^{2}\Mz H\Mz^{\top}.
\end{eqnarray*}
Therefore, the proposal density is $q_{\mathbf{z}}(\cdot|y)=\mathcal{N}(y,\gamma^{2}I+\nu^{2}\Mz H\Mz^{\top})$. 
\end{proof}

\section{\label{sec:Proposal-Contours-for}Further details on synthetic experiments}

\begin{figure}
\begin{centering}
\includegraphics{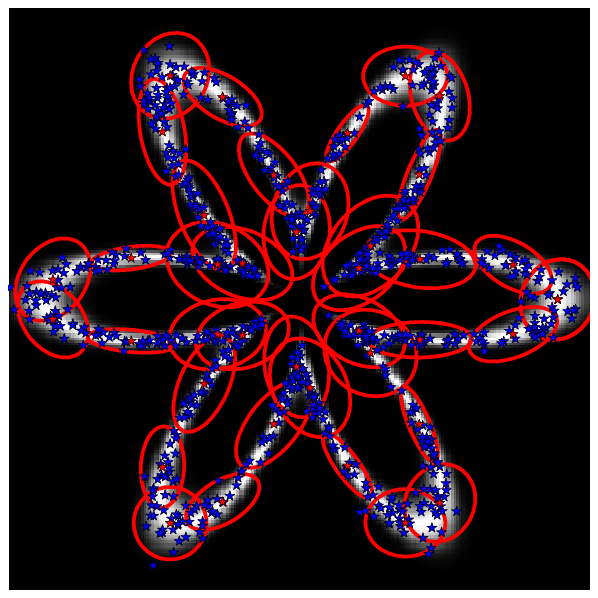}\includegraphics{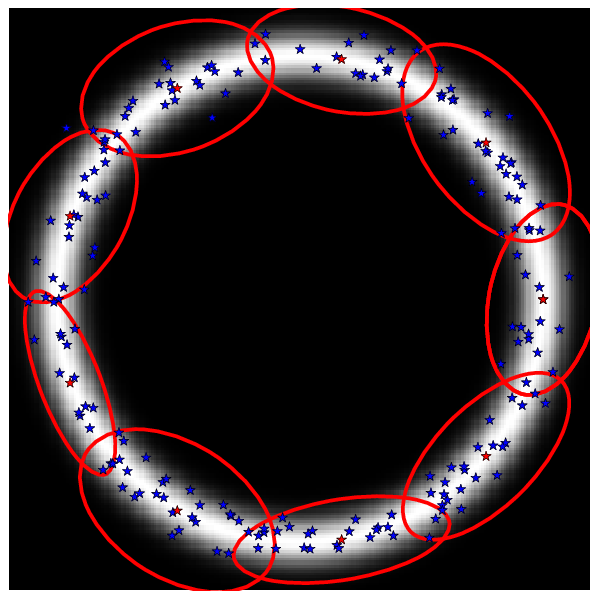}
\par\end{centering}

\caption{\label{fig: proposal_contours-1}$95\%$ contours (red) of proposal
distributions evaluated at a number of points, for the flower and
the ring target. Underneath are the density heatmaps, and the samples
(blue) used to construct the proposals.}
 
\end{figure}

\paragraph{Proposal contours for the Flower target. }

The $d$-dimensional flower target $\mathcal{F}(r_{0},A,\omega,\sigma)$
is given by
\begin{eqnarray*}
\mathcal{F}(x;r_{0},A,\omega,\sigma) & = & \exp\left(-\frac{\sqrt{x_{1}^{2}+x_{2}^{2}}-r_{0}-A\cos\left(\omega\textrm{atan2}\left(x_{2},x_{1}\right)\right)}{2\sigma^{2}}\right)\mathcal{N}(x_{3:d};0,I).
\end{eqnarray*}
This distribution concentrates around the $r_{0}$-circle with a periodic
perturbation (with amplitude $A$ and frequency $\omega$) in the
first two dimensions. For $A=0$, we obtain a band around the $r_{0}$-circle,
which we term the ring target. Figure \ref{fig: proposal_contours-1}
gives the contour plots of the MCMC Kameleon proposal distributions
on two instances of the flower target.

\begin{figure*}
\begin{centering}
\includegraphics[width=2.8in]{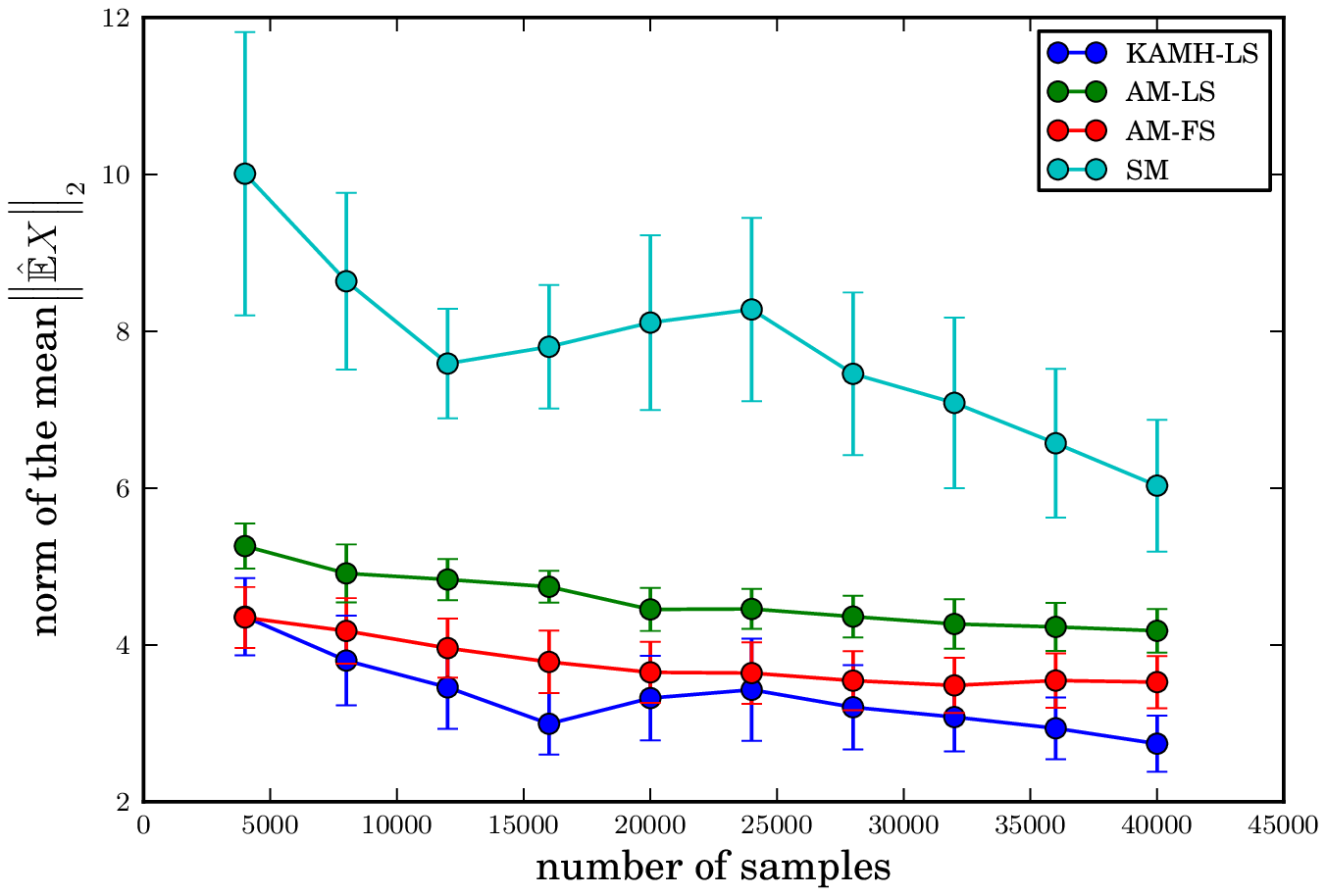}\includegraphics[width=2.8in]{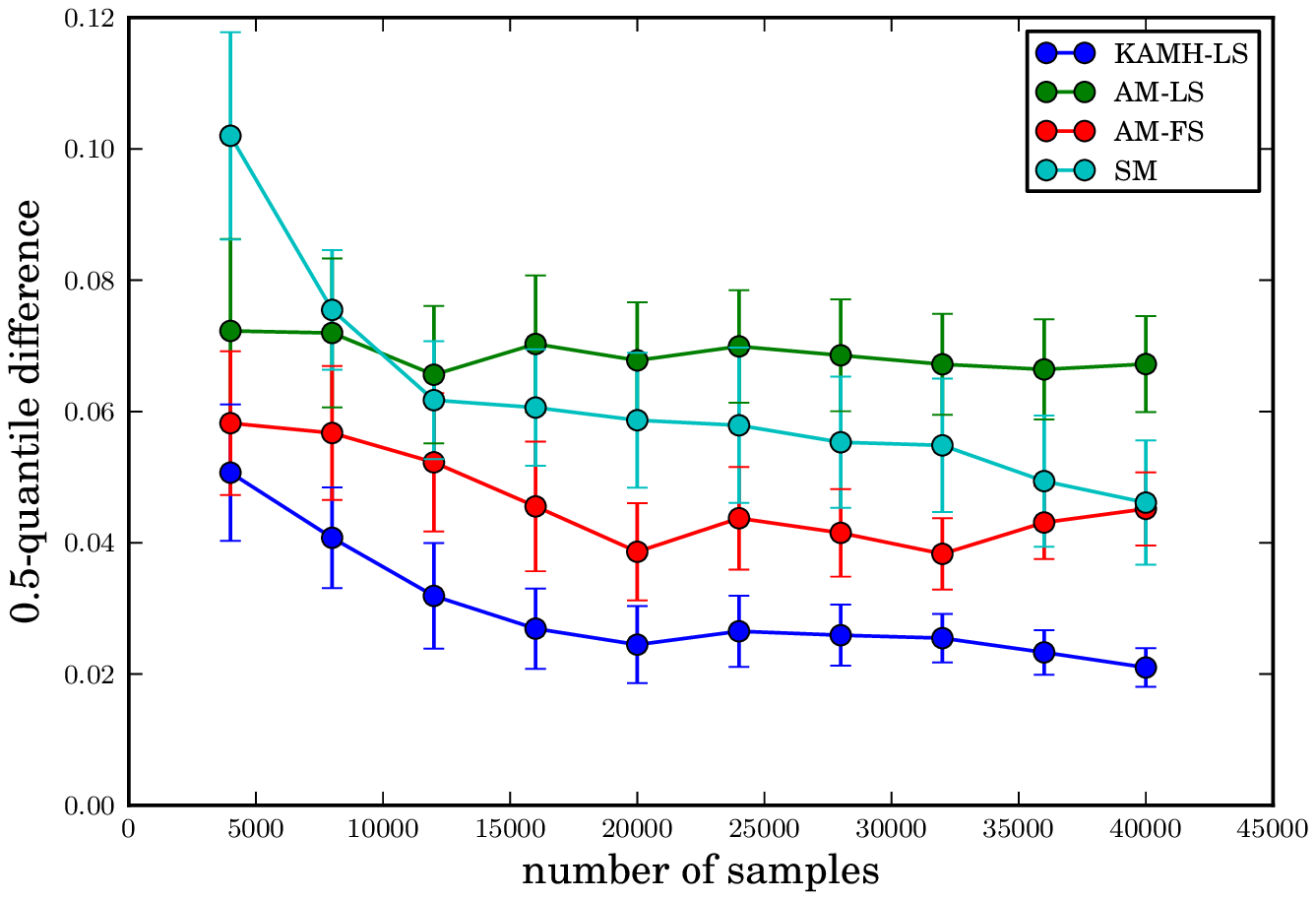}
\par\end{centering}

\caption{\label{fig:banana_drop_comparison}Comparison of \textbf{SM}, \textbf{AM-FS},
\textbf{AM-LS} and \textbf{KAMH-LS} in terms of the norm of the estimated
mean (left) and in terms of the deviation from the 0.5-quantile (right)
on the strongly twisted Banana distribution. The results are averaged
over 20 chains for each sampler. Error bars represent $80\%$-confidence
intervals.}
\end{figure*}

\paragraph{Convergence statistics for the Banana target.}

Figure \ref{fig:banana_drop_comparison} illustrates how the norm
of the mean and quantile deviation (shown for $0.5$-quantile) for
the strongly twisted Banana target decrease as a function of the number
of iterations. This shows that the trends observed in the main text
persist along the evolution of the whole chain.

\section{\label{sec:Principal-components-proposals}Principal Components Proposals}

An alternative approach to the standard adaptive Metropolis, discussed
in \citet[Algorithm 8]{Andrieu08}, is to extract $m\leq d$ principal
eigenvalue-eigenvector pairs $\left\{ \left(\lambda_{j},v_{j}\right)\right\} _{j=1}^{m}$
from the estimated covariance matrix $\Sigma_{\mathbf{z}}$ and use
the proposal that takes form of a mixture of one-dimensional random
walks along the principal eigendirections
\begin{eqnarray}
q_{\mathbf{z}}\left(\cdot|y\right) & = & \sum_{j=1}^{m}\omega_{j}\mathcal{N}(y,\nu_{j}^{2}\lambda_{j}v_{j}v_{j}^{\top}).\label{eq: pca_proposal}
\end{eqnarray}
In other words, given the current chain state $y$, the $j$-th principal
eigendirection is chosen with probability $\omega_{j}$ (choice $\omega_{j}=\lambda_{j}/\sum_{l=1}^{m}\lambda_{l}$
is suggested), and the proposed point is 
\begin{equation}
x^{*}=y+\rho\nu_{j}\sqrt{\lambda_{j}}v_{j},\label{eq: pca_update}
\end{equation}
with $\rho\sim\mathcal{N}(0,1)$. Note that each eigendirection may
have a different scaling factor $\nu_{j}$ in addition to the scaling
with the eigenvalue.

We can consider an analogous version of the update \eqref{eq: pca_update}
performed in the RKHS
\begin{eqnarray}
f & = & k(\cdot,y)+\rho\nu_{j}\sqrt{\lambda_{j}}\mathbf{v}_{j},\label{eq: kpca_update}
\end{eqnarray}
with $m\leq n$ principal eigenvalue-eigenfunction pairs $\left\{ \left(\lambda_{j},\mathbf{v}_{j}\right)\right\} _{j=1}^{m}$.
It is readily shown that the eigenfunctions $\mathbf{v}_{j}=\sum_{i=1}^{n}\tilde{\alpha}_{i}^{(j)}\left[k(\cdot,z_{i})-\muz\right]$
lie in the subspace $\mathcal{H}_{\mathbf{z}}$ induced by $\mathbf{z}$,
and that the coefficients vectors $\tilde{\alpha}^{(j)}=\left(\tilde{\alpha}_{1}^{(j)}\cdots\tilde{\alpha}_{n}^{(j)}\right)^{\top}$
are proportional to the eigenvector of the centered kernel matrix
$HKH$, with normalization chosen so that $\left\Vert \mathbf{v}_{j}\right\Vert _{\mathcal{H}_{k}}^{2}=\left(\tilde{\alpha}^{(j)}\right)^{\top}HKH\tilde{\alpha}^{(j)}=\lambda_{j}\left\Vert \tilde{\alpha}^{(j)}\right\Vert _{2}^{2}=1$
(so that the eigenfunctions have the unit RKHS norm). Therefore, the
update \eqref{eq: kpca_update} has form 
\[
f=k(\cdot,y)+\sum_{i=1}^{n}\beta_{i}^{(j)}\left[k(\cdot,z_{i})-\muz\right],
\]
where $\beta^{\ensuremath{\left(j\right)}}=\rho\nu_{j}\sqrt{\lambda_{j}}\tilde{\alpha}^{(j)}$.
But $\aj=\sqrt{\lambda_{j}}\tilde{\alpha}^{(j)}$ are themselves the
(unit norm) eigenvectors of $HKH,$ as $\left\Vert \aj\right\Vert _{2}^{2}=\lambda_{j}\left\Vert \tilde{\alpha}^{(j)}\right\Vert _{2}^{2}=1$.
Therefore, the appropriate scaling with eigenvalues is already included
in the $\beta$-coefficients, just like in the MCMC Kameleon, where
the $\beta$-coefficients are isotropic. 

Now, we can construct the MCMC PCA-Kameleon by simply substituting
$\beta$-coefficients with $\rho\nu_{j}\alpha^{(j)}$, where $j$
is the selected eigendirection, and $\nu_{j}$ is the scaling factor
associated to the $j$-th eigendirection. We have the following steps:
\begin{enumerate}
\item Perform eigendecomposition of $HKH$ to obtain the $m\leq n$ eigenvectors
$\left\{ \alpha_{j}\right\} _{j=1}^{m}.$
\item Draw $j\sim\text{{Discrete}}\left[\omega_{1},\ldots,\omega_{m}\right]$
\item $\rho\sim\mathcal{N}(0,1)$ 
\item $x^{*}|y,\rho,j\sim\mathcal{N}(y+\rho\nu_{j}M_{\mathbf{z},y}H\aj,\gamma^{2}I)$
($d\times1$ normal in the original space)
\end{enumerate}
Similarly as before, we can simplify the proposal by integrating out
the scale $\rho$ of the moves in the RKHS.
\begin{prop}
\textup{$q_{\mathbf{z}}(\cdot|y)=\sum_{j=1}^{m}\omega_{j}\mathcal{N}(y,\gamma^{2}I+\nu_{j}^{2}\Mz H\alpha^{(j)}\left(\alpha^{(j)}\right)^{\top}H\Mz^{\top})$.}\end{prop}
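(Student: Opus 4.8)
The plan is to exploit the hierarchical (mixture) structure of the sampling scheme and reduce the claim to a single Gaussian marginalization, in the spirit of the proof of Proposition 2. By construction, the proposal is generated in three stages: the index $j$ is drawn from the discrete distribution with weights $\omega_j$; then $\rho\sim\mathcal{N}(0,1)$; and finally $x^{*}\mid y,\rho,j\sim\mathcal{N}(y+\rho\nu_{j}\Mz H\aj,\gamma^{2}I)$. Since $j$ is sampled first and the remaining randomness is conditionally independent across components, the marginal density factorizes as $q_{\mathbf{z}}(x^{*}|y)=\sum_{j=1}^{m}\omega_{j}\,q_{\mathbf{z}}^{(j)}(x^{*}|y)$, where $q_{\mathbf{z}}^{(j)}(x^{*}|y)=\int p(\rho)\,p_{\mathbf{z}}(x^{*}|y,\rho,j)\,d\rho$ is the proposal conditioned on the $j$-th eigendirection. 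Thus it suffices to identify each $q_{\mathbf{z}}^{(j)}$ as the claimed Gaussian and then reassemble the mixture.

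For the conditional density I would write $x^{*}=y+\rho v_{j}+\xi$, where $v_{j}=\nu_{j}\Mz H\aj\in\mathbb{R}^{d}$ is fixed given $y$ and $j$, $\xi\sim\mathcal{N}(0,\gamma^{2}I)$, and $\rho,\xi$ are independent. The centered variable $x^{*}-y$ is then a sum of two independent Gaussian vectors: the (degenerate) mean-zero Gaussian $\rho v_{j}$ with covariance $v_{j}v_{j}^{\top}$, and $\xi$ with covariance $\gamma^{2}I$. Summing covariances gives $x^{*}-y\sim\mathcal{N}(0,\gamma^{2}I+v_{j}v_{j}^{\top})$, so that $q_{\mathbf{z}}^{(j)}(\cdot|y)=\mathcal{N}(y,\gamma^{2}I+v_{j}v_{j}^{\top})$. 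Using the symmetry $H^{\top}=H$ of the centering matrix, $v_{j}v_{j}^{\top}=\nu_{j}^{2}\Mz H\aj\ajt H\Mz^{\top}$, which is exactly the covariance in the statement; substituting back into the mixture yields the result. A reader who prefers to match Proposition 2 literally can instead complete the square in the scalar $\rho$-integral and apply the Woodbury identity to the rank-one term, but since the integrated-out Gaussian is one-dimensional the matrix manipulations collapse.

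The argument is essentially routine once the hierarchical sampling is unwound, so there is no substantial obstacle; the only points requiring care are bookkeeping ones. First, one must invoke that the $\aj$ are the \emph{unit-norm} eigenvectors of $HKH$, so that the eigenvalue scaling is already folded into the coefficients (as established in the paragraph preceding the proposition) and no stray $\lambda_{j}$ factors survive in the covariance. Second, one must retain the symmetry of $H$ when forming $v_{j}v_{j}^{\top}$, so that the centering matrix appears on both sides of $\aj\ajt$. With these observations the mixture covariance matches the stated form term by term.
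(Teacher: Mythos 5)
Your proof is correct, but it takes a genuinely different and more elementary route than the paper's. The paper proceeds by brute force on densities: it writes out the joint $p(\rho)p(x^{*}\mid y,\rho,j)$, completes the square in the scalar $\rho$, integrates it out via the standard Gaussian integral, and then applies the Woodbury identity to convert the resulting inverse covariance $R^{-1}=\frac{1}{\gamma^{2}}\bigl(I-\frac{\nu_{j}^{2}\sigma^{2}}{\gamma^{2}}\Mz H\aj\ajt H\Mz^{\top}\bigr)$ into $R=\gamma^{2}I+\nu_{j}^{2}\Mz H\aj\ajt H\Mz^{\top}$; the mixture is then assembled by summing over $j$. You instead observe that, conditionally on $j$, the proposal is the law of $y+\rho v_{j}+\xi$ with $v_{j}=\nu_{j}\Mz H\aj$ and $\rho,\xi$ independent Gaussians, so closure of Gaussians under independent sums immediately gives covariance $\gamma^{2}I+v_{j}v_{j}^{\top}$, with no need to complete any square or invoke the matrix inversion lemma. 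Your argument is shorter and makes transparent \emph{why} the rank-one term appears (it is simply $\mathbb{E}[\rho^{2}]\,v_{j}v_{j}^{\top}$); the paper's computation buys an explicit expression for the intermediate quantities ($\sigma^{2}$, $\mu$, $R^{-1}$) and mirrors its own proof of Proposition 2, at the cost of the Woodbury manipulation. One minor remark: the unit-norm normalization of $\aj$ that you flag is context for interpreting the construction, but is not actually needed to prove the proposition as stated, since the claim holds for whatever coefficient vectors $\aj$ are plugged into the sampling scheme. The only facts your argument genuinely relies on are the independence of $\rho$ and $\xi$ given $j$, the symmetry $H^{\top}=H$, and the positive definiteness of $\gamma^{2}I+v_{j}v_{j}^{\top}$ (guaranteed by $\gamma>0$) so that the mixture component has a proper density; all of these hold.
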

\begin{proof}
We start with

\begin{eqnarray*}
p(\rho)p(x^{*}\mid y,\rho,j) & \propto & \exp\left[-\frac{1}{2\gamma^{2}}(x^{*}-y)^{\top}(x^{*}-y)\right]\\
 &  & \;\cdot\exp\left[-\frac{1}{2}\left\{ \left(1+\frac{\nu_{j}^{2}}{\gamma^{2}}\ajt H\Mz^{\top}\Mz H\aj\right)\rho^{2}-2\rho\frac{\nu_{j}}{\gamma^{2}}\ajt H\Mz^{\top}\left(x^{*}-y\right)\right\} \right].
\end{eqnarray*}
By substituting

\begin{eqnarray*}
\sigma^{-2} & = & 1+\frac{\nu_{j}^{2}}{\gamma^{2}}\ajt H\Mz^{\top}\Mz H\aj,\\
\mu & = & \sigma^{2}\left(\frac{\nu_{j}}{\gamma^{2}}\ajt H\Mz^{\top}\left(x^{*}-y\right)\right),
\end{eqnarray*}
we integrate out $\rho$ to obtain:

\begin{eqnarray*}
p\left(x^{*}\mid y,j\right) & \propto & \exp\left[-\frac{1}{2}\left\{ \frac{1}{\gamma^{2}}(x^{*}-y)^{\top}(x^{*}-y)-\frac{\nu_{j}^{2}\sigma^{2}}{\gamma^{4}}\left(x^{*}-y\right)^{\top}\Mz H\alpha^{(j)}\left(\alpha^{(j)}\right)^{\top}H\Mz^{\top}\left(x^{*}-y\right)\right\} \right]\\
 & = & \exp\left[-\frac{1}{2}\left(x^{*}-y\right)^{\top}R^{-1}\left(x^{*}-y\right)\right]
\end{eqnarray*}
where $R^{-1}=\frac{1}{\gamma^{2}}\left(I-\frac{\nu_{j}^{2}\sigma^{2}}{\gamma^{2}}\Mz H\alpha^{(j)}\left(\alpha^{(j)}\right)^{\top}H\Mz^{\top}\right)$.
We can simplify the covariance $R$ using the Woodbury identity to
obtain:
\begin{eqnarray*}
R & = & \gamma^{2}(I-\frac{\nu_{j}^{2}\sigma^{2}}{\gamma^{2}}\Mz H\alpha^{(j)}\left(\alpha^{(j)}\right)^{\top}H\Mz^{\top})^{-1}\\
 & = & \gamma^{2}\left(I+\frac{\nu_{j}^{2}\sigma^{2}}{\gamma^{2}}\Mz H\aj\left(1-\frac{\nu_{j}^{2}\sigma^{2}}{\gamma^{2}}\ajt H\Mz^{\top}\Mz H\aj\right)^{-1}\ajt H\Mz^{\top}\right)\\
 & = & \gamma^{2}\left(I+\frac{\nu_{j}^{2}}{\gamma^{2}}\Mz H\alpha^{(j)}\left(\alpha^{(j)}\right)^{\top}H\Mz^{\top}\right)\\
 & = & \gamma^{2}I+\nu_{j}^{2}\Mz H\alpha^{(j)}\left(\alpha^{(j)}\right)^{\top}H\Mz^{\top}.
\end{eqnarray*}
The claim follows after summing over the choice $j$ of the eigendirection
(w.p. $\omega_{j}$).\end{proof}

\end{document}